\icmltitlerunning{A Kernelized Stein Discrepancy for Goodness-of-fit Tests}
\begin{document} 

\twocolumn[
\icmltitle{A Kernelized Stein Discrepancy for Goodness-of-fit Tests}

\icmlauthor{Qiang Liu}{qliu@cs.dartmouth.edu}
\icmladdress{Computer Science, Dartmouth College, NH, 03755}
\icmlauthor{Jason D. Lee}{jasondlee88@eecs.berkeley.edu}
\icmlauthor{Michael Jordan}{jordan@cs.berkeley.edu}
\icmladdress{Department of Electrical Engineering and Computer Science  University of California, Berkeley, CA 94709}

\icmlkeywords{RKHS, Stein's method, goodness-of-fit tests}

\vskip 0.3in
]



%

\newcommand{\fix}{\marginpar{FIX}}
\newcommand{\new}{\marginpar{NEW}}
\newcommand{\jnote}[1]{{[\color{red}Jason: #1]}}



\begin{abstract}
We derive a new discrepancy statistic for measuring differences between two probability distributions based on combining Stein's identity with the reproducing kernel Hilbert space theory.  
We apply our result to test how well a probabilistic model fits a set of observations, 
and derive a new class of powerful goodness-of-fit tests that are widely applicable for complex and high dimensional distributions, even for those with computationally intractable normalization constants. 
Both theoretical and empirical properties of our methods are studied thoroughly. 
\end{abstract}

\section{Introduction}
\myempty{
 Evaluating the goodness-of-fit of models over observed data is a fundamental task in machine learning and statistics. 
 Although there has been a rich statistical literature in goodness-of-fit tests \citep[e.g.,][]{d1986goodness, lehmann2006testing}, the classical methods, such as 
chi-square test and Kolmogorov-Smirnov test, only work for univariate or simple distributions and are not applicable for complex and high dimensional models 
widely used in modern learning techniques. 
In machine learning practice, likelihood ratios, or Bayesian factors are widely used for evaluating or comparing models, 
}

 Evaluating the goodness-of-fit of models over observed data is a fundamental task in machine learning and statistics. 
Traditional approaches often involve calculating or comparing the likelihoods or cumulative distribution functions (CDF) of the models. 
Unfortunately, modern learning techniques increasingly involve complex probabilistic models with computationally intractable likelihoods or CDFs, such as large graphical models, hidden variables models and deep generative models \citep{koller2009probabilistic, ruslan}. 
Although Markov chain Monte Carlo (MCMC) or variational methods can be used 
to approximate the likelihood, their approximation errors are often large and hard to estimate, making it difficult to give results with calibrated statistical significance. 
In fact, it is often a \#P-complete problem to calculate or even approximate likelihoods for graphical models \citep[e.g,][]{chandrasekaran2008complexity}, making  likelihood-based approaches fundamentally infeasible. 

We propose a \emph{likelihood-free} approach for model evaluation with guaranteed statistical significance. 
In particular, we consider the setting of goodness-of-fit testing, where we test whether a given sample $\{x_i\} \sim p(x)$ is drawn from a given distribution $q(x)$, meaning $H_0: p=q$.
Our method is based on a new discrepancy measure between distributions that can be empirically estimated using $U$-statistics, and depends on $q$ only through its score function $\score_q = \nabla_x \log q(x)$; 
this score function does 
not depend on the normalization constant in $q(x)$, and can often be calculated efficiently even when the likelihood is intractable. 
This allows us to apply our methods to complex and high dimensional models on which the likelihood-based methods, or other traditional goodness-of-fit tests, such as 
 $\chi^2$-test and Kolmogorov-Smirnov test, can not be applied. 

\paragraph{Main Idea}
Our method is motivated by Stein's method and the reproducing kernel Hilbert space (RKHS) theory. 
Stein's method \citep{stein1972} is a general theoretical tool for obtaining bounds on distances between distributions. 
Roughly speaking, it relies on the basic fact that two smooth densities $p(x)$ and $q(x)$ supported on $\R$ are identical if and only if 
\begin{align}
\label{equ:steq1}
\E_p[\score_q(x) f(x)  + \nabla_x f(x)] = 0
\end{align}
for smooth functions $f(x)$ with proper zero-boundary conditions,  
where $\score_q(x) = \nabla_x \log q(x) ={\nabla_x q(x)}/{q(x)}$ is called the (Stein) \emph{score function} of $q(x)$;
when $p=q$, \eqref{equ:steq1} is known as Stein's identity \citep[e.g.,][]{stein2004use}, and can be proved using integration by parts.
As a result, one can define a Stein discrepancy measure\footnote{Our definition is the square of the typical definition of Stein discrepancy, as such that in \citet{gorham2015measuring}.} between $p$ and $q$ via 
\begin{align}\label{equ:sf}
\S(p,q)  = \max_{f\in \mathcal F} \big(\E_p[ \score_q(x) f(x) + 
\nabla_x f(x)]\big)^2, 
\end{align}
where $\mathcal F$ is a set of smooth functions that satisfies \eqref{equ:steq1} and is also rich enough to ensure $\S(p,q) >0$ whenever $p \neq q$. 
The problem, however, is that $\S(p,q)$ is often computationally intractable because it requires a difficult variational optimization. 
As a result, $\S(p,q)$ is rarely used in practical machine learning;
perhaps the only exception is the recent work of \citet{gorham2015measuring} who obtained a computationally tractable form by enforcing smoothness constraints only on a finite number of points, turning the  optimization into a linear programming. 

We propose a simpler method for obtaining computational tractable Stein discrepancy $\S(p,q) $ by taking $\mathcal F$ to be a ball in a reproducing kernel Hilbert space (RKHS)
associated with a smooth positive definite kernel $k(x,x')$. 
In particular, we show that in this case
\begin{align}
 \S(p,q)  =  \E_{x,x'\sim p } \big [  u_q(x,x') \big], 
\label{equ:Dhh}
\end{align}
where $x,x'$ are i.i.d. random variables drawn from $p$ and
$u_q(x,x')$ is a function (defined in Theorem~\ref{thm:kxx}) that depends on $q$ only through the score function $\nabla_x \log q(x)$ which can be calculated efficiently even when $q$ has an intractable normalization constant. 
Specifically, assuming $q(x) = f(x)/Z$ with $Z = \int f(x) dx $ being the normalization constant, we have $\score_q = \nabla_x \log f (x)$, independent of $Z$; calculating $Z$ involves a high dimension integration, and has been the major challenge for likelihood-based and Bayesian methods for model evaluation. 

With an i.i.d. sample $\{x_i\}$ drawn from the (unknown) $p(x)$, 
the form \eqref{equ:Dhh} also enables efficient empirical estimation of $ \S(p,q)$
via a $U$-statistic,
\begin{align}
\hat \S(p,q) 
 =\frac{1}{n(n-1)}\sum_{i\neq j} u_q(x_i, x_j). 
 \label{equ:hatDhh}
\end{align}
The distribution of $\hat \S(p,q)$ can be well characterized using the theory of $U$-statistics \citep{hoeffding1948class, serfling2009approximation}, allowing 
us to reduce the testing of $p = q$ to
\begin{align*}
H_0:  \E_{p}[u_q(x,x')] = 0 &&vs.&& H_1:  \E_p[u_q(x,x')] > 0. 
\end{align*}

\paragraph{Related Work}
The same idea was independently proposed by \citet{chwialkowski2016kernel} that appears simultaneously in this proceeding. 
The technique of combining Stein's identity with RKHS was first developed by \citet{oates2014control, oates2017control, oates2016convergence} for variance reduction. 
Reviews of classical goodness-of-fit tests can be found in e.g., \citet{lehmann2006testing}, where most methods 
have computational difficulty for unnormalized distributions. 
One exception is \citet{fan2012output}, which uses the identity $\E_q [\vv \score_q]=0$ without using RKHS, but can be inconsistent in power since there exists $q\neq p$ with $\E_p[\vv\score_q]=0$. 


\paragraph{Outline}
Section~\ref{sec:background} introduces RKHS and Stein's identity. 
Section~\ref{sec:ksd} defines our KSD and studies its main properties, and 
 Section~\ref{sec:U} discusses the empirical estimation of KSD and its application in goodness-of-fit tests. 
We discuss related methods in Section~\ref{sec:related},  present experiments in Section~\ref{sec:exp} and conclude the paper in Section~\ref{sec:con}. 

\paragraph{Notations}
We denote by $\X$ a subset of
$d$-dimensional 
real space
 $\RR^d$. 
For a vector-valued function $\vv f(x) = [f_1(x),\ldots, f_{d'}(x)]$, its derivative $\nabla_x \vv f(x )=[\frac{\partial f_j(x)}{\partial x_i}]_{ij}$ is a $d \times d'$ matrix-valued function. 
For a two-variable function (kernel) $k(x,x')$, we use $k(\cdot,x') = k_{x'}(\cdot)$ to refer to a function of $x$ indexed by fixed $x'$. 
For technical simplicity, we will assume all the functions we encounter are absolutely integrable, so that the
 Fubini-Tonelli theorem can be used to exchange the orders of integrals and infinite sums. 

\section{Backgrounds}
\label{sec:background}
We first introduce positive definite kernels and reproducing kernel Hilbert spaces (RKHS) in Section~\ref{sec:kernel}, 
and then Stein's identity and operator in Section~\ref{sec:stein}.
\label{sec:back}
\subsection{Kernels and Reproducing Kernel Hilbert Spaces}
\label{sec:kernel}
%

Let $k(x,x')$ be a positive definite kernel. The spectral decomposition of $k(x,x')$, 
as implied by Mercer's theorem, is defined as 
\begin{align}\label{equ:spectrumK}
k(x,x') = \sum_{j} \lambda_j e_j(x) e_j(x'),
\end{align}
where $\{e_j\}$, $\{\lambda_j \}$ are the orthonormal eigenfunctions and positive eigenvalues of $k(x,x')$, respectively, satisfying $\int e_i(x) e_j(x)dx = \ind[i=j]$,  
for $\forall i, j$. 

For a positive definite kernel $k(x,x')$, its related RKHS $\H$ comprises of linear combinations of its eigenfunctions, i.e., $f(x) = \sum_j f_j e_j(x)$ with $\sum_j f_j^2/\lambda_j < \infty$, endowed with an inner product $\la f, g\ra_{\H} = \sum_j f_j g_j  / \lambda_j$ between $f(x)$ and $g(x) = \sum_j g_j  e_j(x)$. 
Thus this Hilbert space is equipped with a norm $|| f ||_\H $ where $||f ||_\H^2 = \la f, f\ra_\HH = \sum_j f_j ^2  / \lambda_j$. 
One can verify that $k(x, \cdot)$ is in $\H$ and satisfies the important \emph{``reproducing" property},
\begin{align*}
f(x) = \la f, k(\cdot, x) \ra_\H, && 
k(x,x') = \la k(\cdot, x), k(\cdot, x') \ra_\H. 
\end{align*}
Every positive definite kernel $k$ defines a unique RKHS 
 for which $k$ is a reproducing kernel.


We denote by $\H^d = \H \times \cdots \H$ the Hilbert space of $d\times 1$ vector-valued functions $\vv f = \{ f_\ell \colon  f_\ell \in\H \}_{\ell\in [d]}$, equipped with an inner product $\la 
\vv f, \vv g \ra_{\H^d}= \sum_{\ell \in [d]} \la f_\ell, g_\ell \ra_\H$ for $\vv f$ and $\vv g = \{g_\ell\}_{\ell\in [d]}$, and norm $|| \vv f||_{\H} = \sqrt{ \sum_{\ell}  || f_\ell||_{\H}^2}$.

\subsection{Stein's Identity and Operator}
\label{sec:stein}
%
\begin{mydef}
\label{def:def11}
Assume that $\X$ is a subset of $\R^d$ and $p(x)$ a continuous differentiable (also called smooth) density whose support is $\X$. 
The (Stein) score function of $p$ is defined as 
$$
\score_p = \nabla_x \log p(x) = 
\frac{\nabla_x p(x)}{p(x)}. 
$$
We say that a function $f \colon \X \to \R$ is in the \textbf{Stein class} of $p$ if $f$ is smooth and satisfies 
\begin{align}
\label{equ:zeroD}
\int_{x\in \X} \nabla_{x} (f(x) p(x)) dx =0. 
\end{align}
The \textbf{Stein's operator} of $p$ is a linear operator acting on the {Stein class} of $p$, defined as
$$
\stein_p f(x) =  \score_p(x) f(x) + \nabla_x f(x). 
$$ 
Note that both $\score_p$ and $\stein_p f$ are $d\times 1$ vector-valued functions mapping from $\X$ to $\R^d$. 
A vector-valued function $\vv f(x) = [f_1(x),\ldots, f_{d'}(x)]$ is said to be in the Stein class of $p$ if all $f_i$, $\forall i\in [d']$ is in the Stein class of $p$. 
Applying $\stein_p$ on a vector-valued $\vv f(x)$ results a $d\times d'$ matrix-valued function, $\stein_p \vv f(x) =  \score_p(x) \vv f(x)^\top + \nabla_x \vv f(x). $
\end{mydef}

\remark The condition \eqref{equ:zeroD} can be easily checked using integration by parts or divergence theorem; in particular, 
when $\X = \RR^d$, \eqref{equ:zeroD} holds if 
$$
\lim_{||x||\to \infty} f(x) p(x) = 0,
$$ 
which holds, for example,  if $p(x)$ is bounded and $\lim_{||x||\to \infty} f(x)  =0$.
When $\X$ is a compact subset of $\RR^d$ with piecewise smooth boundary $\partial \X$, then by divergence theorem~\citep{marsden2003vector}, \eqref{equ:zeroD} holds if $f(x) p(x) = 0$ for $\forall x\in \partial 
\X$, or more generally if 
$
\oint_{\partial \X} p(x) f(x) \cdot \vec{n}(x) dS(x) = 0, 
$
where $\vec n(x)$ is the unit normal to the boundary $\partial \X$; $\oint_{\partial }  dS(x)$ denotes the surface integral over $\partial \X$. 

\begin{lem}[Stein's Identity]
\label{lem:basic1}
Assume $p(x)$ is a smooth density supported on $\X$, then
$$
\E_p [\stein_p \vv f(x)] = \E_{p} [\score_p(x) \vv f(x)^\top  + \nabla \vv f(x)] = 0,
$$
 for any $\vv f$ that is in the Stein class of $p$.
\end{lem}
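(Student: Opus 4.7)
The plan is to reduce the claim to the zero-boundary condition \eqref{equ:zeroD} that defines the Stein class, working component-wise on the matrix-valued identity. The key algebraic trick is that $p(x)$ acts as an "integrating factor" that turns the operator $\stein_p$ into a pure divergence, which is exactly what is controlled by \eqref{equ:zeroD}.

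First, I would substitute the definition $\score_p(x) = \nabla_x p(x)/p(x)$ into the integrand against $p(x)\,dx$. This multiplies away the $1/p(x)$ factor and gives, for the generic $(i,j)$ entry of $p(x)\,\stein_p \vv f(x)$,
\begin{align*}
\bigl[p(x)\,\stein_p \vv f(x)\bigr]_{ij}
= \partial_{x_i} p(x)\cdot f_j(x) + p(x)\,\partial_{x_i} f_j(x)
= \partial_{x_i}\!\bigl(p(x)\,f_j(x)\bigr),
\end{align*}
by the product rule. Thus the $j$-th column of the matrix $p(x)\,\stein_p\vv f(x)$ is precisely $\nabla_x(p(x) f_j(x))$.

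Second, I would integrate over $\X$ and invoke the Stein-class condition \eqref{equ:zeroD} applied to the scalar function $f_j$, which is in the Stein class of $p$ by the hypothesis that $\vv f$ is. This yields $\int_{\X}\nabla_x(p(x)f_j(x))\,dx=0$ for every $j\in[d']$, so every column (and hence every entry) of $\E_p[\stein_p\vv f(x)]$ vanishes. Assembling the columns into the $d\times d'$ matrix gives the claimed identity $\E_p[\stein_p \vv f(x)]=0$.

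There is no real obstacle here: the result is essentially integration by parts packaged into the matrix-valued notation, with the Stein-class condition doing exactly the job that boundary decay or the divergence theorem would do in a more hands-on derivation. The only minor care required is dimensional bookkeeping—verifying that $\score_p(x)\vv f(x)^\top$ (a $d\times 1$ times $1\times d'$ outer product) and $\nabla_x \vv f(x)$ (a $d\times d'$ Jacobian) combine entrywise as above so that the product rule applies column-by-column—and the Fubini/absolute integrability assumption stated in the Notations paragraph, which legitimizes passing the integral through the sum implicit in the matrix identity.
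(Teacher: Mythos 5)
Your proof is correct and follows essentially the same route as the paper's: you observe that $p(x)\,\stein_p\vv f(x)=\nabla_x(p(x)\vv f(x))$ by the product rule and then invoke the Stein-class condition \eqref{equ:zeroD}, just spelled out column-by-column rather than in the paper's one-line matrix form.
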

\begin{proof}
By the definition of the Stein class, 
simply note that $  \score_p(x) \vv f(x)^\top + \nabla \vv f(x) 
= \nabla_x (\vv f(x)p(x))  / p(x).$
\end{proof}

The following  result gives a convenient tool for our derivation;
it relates the expectation under $p$ of Stein's operator $\stein_q f$ with the 
difference of the score functions of $p$ and $q$. 
\begin{lem}[\citet{ley2013stein}]
\label{lem:basic2}
Assume $p(x)$ and $q(x)$ are smooth densities supported on $\X$
and $\vv f(x)$ is in the Stein class of $p$, 
we have 
$$
\E_p [\stein_q \vv f(x)] 
= \E_p[(\score_q(x) - \score_p(x))\vv f(x)^\top]. 
$$
\end{lem}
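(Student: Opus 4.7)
The plan is to reduce the claim to an immediate application of Stein's identity (Lemma \ref{lem:basic1}) by an add–subtract trick on the score functions. Concretely, the Stein operator of $q$ unpacks as $\stein_q \vv f(x) = \score_q(x) \vv f(x)^\top + \nabla_x \vv f(x)$, and the only piece whose $p$-expectation is not already of the desired form is $\E_p[\nabla_x \vv f(x)]$.

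First I would write out $\stein_q \vv f(x)$ by definition and rearrange as
\[
\stein_q \vv f(x) = \bigl(\score_q(x) - \score_p(x)\bigr)\vv f(x)^\top + \bigl(\score_p(x)\vv f(x)^\top + \nabla_x \vv f(x)\bigr),
\]
where the second parenthesized expression is exactly $\stein_p \vv f(x)$. Taking expectations under $p$ gives
\[
\E_p[\stein_q \vv f(x)] = \E_p\bigl[(\score_q(x) - \score_p(x))\vv f(x)^\top\bigr] + \E_p[\stein_p \vv f(x)].
\]

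Second, since $\vv f$ is assumed to lie in the Stein class of $p$ by hypothesis, Lemma \ref{lem:basic1} applies and yields $\E_p[\stein_p \vv f(x)] = 0$, which makes the second term vanish and establishes the claim. There is essentially no obstacle here: the only subtlety to flag is that the add–subtract step silently uses that $\score_p(x)\vv f(x)^\top$ is $p$-integrable (guaranteed by the absolute-integrability convention stated at the end of the Notations paragraph) so the two expectations can indeed be split apart, and that the Stein-class condition is imposed with respect to $p$ rather than $q$, which is exactly what makes the cancellation work.
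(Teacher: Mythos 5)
Your proof is correct and takes essentially the same route as the paper: both rewrite $\stein_q \vv f$ by adding and subtracting $\stein_p \vv f$, then invoke Stein's identity (Lemma~\ref{lem:basic1}) to annihilate the $\E_p[\stein_p \vv f]$ term. Your extra remarks about integrability and the Stein class being taken with respect to $p$ are accurate but not additional content beyond what the paper's one-line argument implicitly uses.
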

\begin{proof}
Since $\E_p[\stein_p \vv f(x)] = 0$, 
we have
$\E_{p} [ \stein_q \vv f(x)]  =\E_p[\stein_q \vv f(x) - \stein_p \vv f(x)] =  \E_p[(\score_q(x) - \score_p(x))\vv f(x)^\top].$
\end{proof}
Therefore, 
$\E_{p} [ \stein_q\vv f(x)] $ is 
the $\vv f(x)$-weighted expectation of the score function difference $(\score_q(x)-\score_p(x))$ under $p$. 
When $\vv f(x)$ is a $d\times1$ vector-valued function, $\E_{p} [ \stein_q \vv f(x)]$ is a $d\times d$ matrix; taking its trace gives a scalar
$$
\E_p [\trace(\stein_q \vv f(x))]
= \E_p[(\score_q(x) - \score_p(x))^\top \vv f(x)],
$$
which was first derived in \citet{gorham2015measuring} using Langevin diffusion. 
It is an interesting direction to consider the possibility of  using determinant or other matrix norms instead of the trace. 

\section{Kernelized Stein Discrepancy}
\label{sec:ksd}
We introduce our kernelized Stein discrepancy (KSD) with an elementary definition motivated by Lemma~\ref{lem:basic2}, and then establish its connection with Stein's method and RKHS. 
%

\begin{mydef}
A kernel $k(x, x')$ is said to be \emph{integrally strictly positive definition}, if for any function $g$ that satisfies $0<||g||_2^2 < \infty$, 
\begin{align}\label{equ:integral}
\int_{\X} g(x) k(x,x') g(x') dx dx'  > 0. 
&&
\end{align}
\end{mydef}

\begin{mydef}
\label{def:ksdkxx}
The {\bf kernelized Stein discrepancy} (KSD) $\S(p,q)$ between distribution $p$ and $q$ is defined as
\begin{align}
\label{equ:dpEdsq}
&\S(p,q) 
= \E_{x,x'\sim p} [ \vv \delta_{q,p}(x)^\top k(x,x')  \vv \delta_{q,p}(x') ] , 
\end{align}
where $\vv \delta_{q,p}(x) = \score_q(x) - \score_p(x)$ is the score difference between $p$ and $q$, and 
$x$, $x'$ are  i.i.d. draws from $p(x)$. 
\end{mydef}
\begin{pro}
\label{pro:pos}
Define $\vv g_{p,q}(x) = p(x)( \score_q(x)- \score_p(x))$. 
Assume $k(x,x')$ is integrally strictly positive definite, and $p$, $q$ are continuous densities with $||\vv g_{p,q}||_2^2 < \infty$, 
we have $\S(p,q) \geq 0 $ and $\S(p,q) = 0$ if and only if $p = q$.
\end{pro}
\begin{proof}
Result directly follows the definition in \eqref{equ:integral}. 
\end{proof}
This establishes $\S(p,q)$ as a valid discrepancy measure. 
The requirement that $||\vv g_{p,q}||_2^2< \infty$ 
is a mild condition and can easily hold, e.g., when the tail of $p(x)$ decays exponentially, 
but it may not hold when $p(x)$ has a heavy tail.\footnote{One counterexample as proposed by an anonymous reviewer is when $p$ is a Cauchy distribution and $q$ is a Gaussian distribution.}


The $\S(p,q)$ as defined in \eqref{equ:dpEdsq} requires to know both $\score_p$ and $\score_q$; 
we now apply Stein's identity to derive 
the more convenient form \eqref{equ:Dhh} that only requires $\score_q$. 

\begin{mydef}
A kernel $k(x,x')$ is said to be in the Stein class of $p$ if $k(x,x')$ has continuous second order partial derivatives, and
both $k(x, \cdot)$ and $k(\cdot, x)$ are in the Stein class of $p$ for any fixed $x$. 
\end{mydef}
It is easy to check that the RBF kernel $k(x,x') =  \exp(-\frac{1}{2 h^2} {||x - x'||^2_2})$ is in the Stein class for smooth densities supported on $
\X = \R^d$. 

\begin{pro}\label{pro:steinrkhsIS}
If $k(x,x')$ is in the Stein class of $p$, so is any $f\in \H$. 
\end{pro}

\begin{thm}
\label{thm:kxx}
Assume $p$ and $q$ are smooth densities and 
$k(x,x')$ is in the Stein class of $p$. 
Define
\begin{align*}
&u_q(x,x') = \score_q(x)^\top k(x,x') \score_q(x')  +   \score_q(x)^\top \nabla_{x'}k(x,x') +  \\ 
&~~~~~~~~~~~~~~~~~~~  + \nabla_{x}k(x,x')^\top   \score_q(x') + \trace(\nabla_{x,x'}k(x,x')).
\end{align*}
\myempty{
Define $u_q(x,x') = \trace(U_q(x,x'))$ where 
\begin{align*}
&U_q(x,x') = \score_q(x) k(x,x') \score_q(x')^\top  +   \score_q(x) \nabla_{x'}k(x,x')^\top +  \\ 
&~~~~~~~~~~~~~~~~~~~  + \nabla_{x}k(x,x')   \score_q(x')^\top + \nabla_{x,x'}k(x,x').
\end{align*}
}
 \begin{align}
\hspace{-0em}\text{then} &&~~~~~~&& \S(p,q) = \E_{x,x'\sim p} [u_q(x,x')]. && ~~~~~~\hfill
\label{equ:dpEu}
\end{align}
\end{thm}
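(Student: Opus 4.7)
The plan is to expand the quadratic form defining $\S(p,q)$ and use Stein's identity repeatedly to replace every appearance of $\score_p$ by a derivative of the kernel.

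First I would write
\begin{align*}
\vv\delta_{q,p}(x)^\top k(x,x') \vv\delta_{q,p}(x')
&= \score_q(x)^\top k(x,x') \score_q(x') \\
&\quad - \score_q(x)^\top k(x,x') \score_p(x') \\
&\quad - \score_p(x)^\top k(x,x') \score_q(x') \\
&\quad + \score_p(x)^\top k(x,x') \score_p(x'),
\end{align*}
and take expectations under $x,x'\sim p$ term by term. The first term is already in the desired form and contributes $\E_p[\score_q(x)^\top k(x,x')\score_q(x')]$ to $u_q$.

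For the two cross terms, I would fix one variable and integrate out the other using Stein's identity (Lemma~\ref{lem:basic1}). For instance, in the third term, I fix $x'$ and view $f(x) := k(x,x')$ as a scalar function of $x$; since $k(\cdot,x')$ is in the Stein class of $p$, Lemma~\ref{lem:basic1} gives $\E_p[\score_p(x)k(x,x')] = -\E_p[\nabla_x k(x,x')]$, so the third term evaluates to $+\E_p[\nabla_x k(x,x')^\top \score_q(x')]$. By symmetry, the second term contributes $+\E_p[\score_q(x)^\top \nabla_{x'}k(x,x')]$. This accounts for the two middle terms of $u_q$.

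The remaining piece, $\E_p[\score_p(x)^\top k(x,x')\score_p(x')]$, requires applying Stein's identity twice. Integrating out $x$ first (as above) turns it into $-\E_p[\nabla_x k(x,x')^\top \score_p(x')]$. Now I fix $x$ and apply Lemma~\ref{lem:basic1} with the vector-valued $\vv f(x') := \nabla_x k(x,x')$; by Proposition~\ref{pro:steinrkhsIS} (and the hypothesis that $k$ is in the Stein class of $p$) this $\vv f$ is itself in the Stein class of $p$, so its trace form of Stein's identity yields $\E_p[\nabla_x k(x,x')^\top \score_p(x')] = -\E_p[\trace(\nabla_{x,x'} k(x,x'))]$. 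Combined with the leading minus sign, this contributes $+\E_p[\trace(\nabla_{x,x'}k(x,x'))]$, giving the last term of $u_q$.

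Adding the four contributions reproduces the stated formula for $u_q(x,x')$, proving $\S(p,q) = \E_{x,x'\sim p}[u_q(x,x')]$. I expect the main subtlety to be verifying that $\nabla_x k(x,\cdot)$ qualifies as being in the Stein class of $p$ so that Stein's identity may legitimately be iterated; this is where the assumption that $k$ is in the Stein class of $p$ (which requires $k$ to have continuous second-order partials and the boundary/decay condition for both slots) is essential, and Proposition~\ref{pro:steinrkhsIS} together with the implicit integrability assumptions declared in the Notations paragraph licenses the Fubini swap of the two integrations.
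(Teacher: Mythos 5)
Your proof is correct, but it takes a more elementary, hands-on route than the paper's. The paper establishes the identity by applying Lemma~\ref{lem:basic2} twice to the quadratic form as a whole: first on $k(x,\cdot)$ with $x$ fixed, producing $\E_{x,x'}[(\score_q(x)-\score_p(x))^\top \vv v(x,x')]$ with $\vv v(x,x') := \stein_q k_x(x') = k(x,x')\score_q(x') + \nabla_{x'}k(x,x')$, and then on $\vv v(\cdot,x')$ with $x'$ fixed, giving $\E[\trace(\stein_q\vv v)]$, which unpacks to $\E[u_q]$. You instead expand $\vv\delta_{q,p}(x)^\top k(x,x')\vv\delta_{q,p}(x')$ into its four monomials and invoke the raw Stein identity (Lemma~\ref{lem:basic1}) to kill each appearance of $\score_p$ one at a time. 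The two arguments have identical mathematical content --- Lemma~\ref{lem:basic2} is precisely Lemma~\ref{lem:basic1} packaged to eliminate $\score_p$ from a $\stein_q$-type expression --- but the paper's route avoids bookkeeping the four pieces and makes it transparent that the result is ``Stein's operator applied to each slot of $k$.'' One small structural difference is worth noting: to iterate Stein's identity in the second slot you need $\nabla_x k(x,\cdot)$ to lie in the Stein class of $p$, and you get this from Proposition~\ref{pro:steinrkhsIS} via the fact that each component $\nabla_{x_\ell}k(x,\cdot)\in\H$. That works, but the paper's proof gives a shorter and more self-contained justification by exchanging $\nabla$ with the integral, $\int \nabla_{x'}(p(x')\nabla_x k(x,x'))\,dx' = \nabla_x\int \nabla_{x'}(p(x')k(x,x'))\,dx' = 0$, relying only on $k(x,\cdot)$ being Stein-class and the Fubini-type swap already assumed in the Notations paragraph.
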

\begin{proof}
Apply Lemma~\ref{lem:basic2} twice, first on $k(\cdot ,x')$ for fixed $x'$, and then with fixed $x$. See the Appendix. 
\end{proof}
%
%
%
The representation in \eqref{equ:dpEu} is of central importance for our framework, since it provides a tractable formula for empirical evaluation of $\S(p,q)$ and its confidence interval based on the sample $\{x_i\} \sim p$ and score function $\score_q$; see Section~\ref{sec:U} for further discussion. 
An equivalent result of Theorem~\ref{thm:kxx} was first presented in Theorem 1 of \citet{oates2014control}.


\myempty{
An alternative form of $U_q(x,x')$ is $$U_q(x,x') = k(x,x') \big([\score_q(x) + \nabla_{x}\log k(x,x')]  [\score_q(x') + \nabla_{x'}\log k(x,x')]^\top  + \nabla_{x,x'}\log k(x,x')\big). $$

\remark It is easy to find kernels in Stein classes of given distributions; in particular,
\begin{pro}
The RBF kernel $k(x,x') =  \exp(-\frac{||x - x'||^2_2}{2 h^2})$ is Stein class for any smooth density $p(x)$ supported in $\X = \R^d$ with finite mean and variance. 
\end{pro}
\begin{exa}
Consider the RBF kernel $k(x,x') =  \exp(-\frac{||x - x'||^2_2}{2 h^2})$, we have 
then 
\begin{align*}
&\nabla_x k(x,x')   =  k(x,x') \frac{1}{h^2}(x' - x),  &&
&\nabla_{x,x'} k(x,x')   =  k(x,x') \big( -\frac{1}{h^4}(x' - x) (x' - x)^\top  + \frac{1}{h^2}  I \big), 
\end{align*}
where $I$ is the $d\times d$ identity matrix. Therefore, we can show
\begin{align*}
u_q(x,x') 
&= k(x,x') \big(  \score_q(x)^\top \score_q(x') 
+ \frac{1}{h^2} \score_q(x)^\top (x - x')
+ \frac{1}{h^2} (x' - x)^\top \score_q(x')
 - \frac{1}{h^4} ||x' - x ||_2^2  + \frac{d}{h^2}  \big) \\
 & = k(x,x') 
 \big(
[\score_q(x) + \frac{1}{h^2}(x' - x)]^\top [\score_q(x') + \frac{1}{h^2}(x-x')] + \frac{d}{h^2}\big)
\end{align*}
Taking $q(x)$ to be Gaussian $\normal(0, \mu, \Sigma)$, then $\score_q(x) = - \Sigma^{-1}(x-\mu)$ and
$$
u_q(x,x') = k(x,x') 
\big(
[-\Sigma^{-1}(x-\mu) + \frac{1}{h^2}(x' - x)]^\top [-\Sigma^{-1}(x'-\mu) + \frac{1}{h^2}(x-x')] + \frac{d}{h^2}\big). 
$$
\end{exa}
}

%
Using the spectral decomposition of $k(x,x')$, we can show that $\S(p,q)$ is effectively applying Stein's operator simultaneously on all the eigenfunctions $e_j(x)$ of $k(x,x')$.  
\begin{thm}
\label{thm:mercer22}
Assume $k(x,x')$ is a positive definite kernel in the Stein class of $p$, with positive eigenvalues $\{\lambda_j\}$ and eigenfunctions $\{e_j(x)\}$, 
then $u_q(x,x')$ is also a positive definite kernel, and can be rewritten into
\begin{align}
\label{equ:dpuq}
u_q(x,x') =  \sum_{j}  \lambda_j ~ [\stein_q e_j(x) ] ^\top ~ [\stein_q{e_j(x')}], 
\end{align}
where $\stein_q e_j(x)  =\score_q(x)  e_j(x)  + \nabla_x e_j(x)$ is the Stein's operator acted on $e_j$. 
In addition, 
\begin{align}
\label{equ:dpEej2} 
\S(p,q) 
& = \sum_{j} \lambda_j  || \E_{x\sim p} [\stein_q e_j(x) ] ||^2_2. 
\end{align}
Note that although $\{e_j\}$ are orthonormal, the $\{\stein_q e_j(x) \}$ are no longer orthonormal in general. 
%
\end{thm}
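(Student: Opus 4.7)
My plan is to substitute Mercer's expansion \eqref{equ:spectrumK} into the four-term expression for $u_q(x,x')$ from Theorem~\ref{thm:kxx}, differentiate term-by-term, and then rearrange the result so that each eigenfunction $e_j$ appears wrapped inside the Stein operator $\stein_q$. Once this representation is established, positive definiteness and the formula for $\S(p,q)$ will both follow immediately.

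More concretely, writing $k(x,x') = \sum_j \lambda_j e_j(x) e_j(x')$, I would expand
\begin{align*}
\nabla_{x'} k(x,x') &= \sum_j \lambda_j e_j(x)\, \nabla_{x'} e_j(x'), \\
\nabla_{x} k(x,x') &= \sum_j \lambda_j e_j(x')\, \nabla_{x} e_j(x), \\
\nabla_{x,x'} k(x,x') &= \sum_j \lambda_j\, \nabla_x e_j(x)\, \nabla_{x'} e_j(x')^\top,
\end{align*}
so that $\trace(\nabla_{x,x'} k(x,x')) = \sum_j \lambda_j\, \nabla_x e_j(x)^\top \nabla_{x'} e_j(x')$. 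Plugging these into the four summands in the definition of $u_q$ and collecting by $j$, the $j$-th term becomes
\begin{align*}
\lambda_j \bigl[\score_q(x) e_j(x) &+ \nabla_x e_j(x)\bigr]^\top \\ &\cdot \bigl[\score_q(x') e_j(x') + \nabla_{x'} e_j(x')\bigr],
\end{align*}
which is exactly $\lambda_j [\stein_q e_j(x)]^\top [\stein_q e_j(x')]$, giving \eqref{equ:dpuq}.

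From this representation, the fact that $u_q$ is positive definite is transparent: each term $\lambda_j [\stein_q e_j(x)]^\top [\stein_q e_j(x')]$ is a positive semidefinite kernel (it is an inner product of a vector-valued feature map, scaled by the positive constant $\lambda_j$), and a nonnegative linear combination of positive semidefinite kernels is positive semidefinite. For \eqref{equ:dpEej2}, I would apply Theorem~\ref{thm:kxx} to get $\S(p,q) = \E_{x,x'\sim p}[u_q(x,x')]$, substitute the series \eqref{equ:dpuq}, exchange expectation with the sum, and use the independence of $x$ and $x'$ to split the expectation; the $j$-th term then becomes $\lambda_j \|\E_{x\sim p}[\stein_q e_j(x)]\|_2^2$.

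The main obstacle is analytic rather than algebraic: justifying the term-by-term differentiation of the Mercer series (to pass $\nabla_x$, $\nabla_{x'}$, and the mixed partials inside the sum) and the swap of expectation with the infinite sum. The paper's standing assumption in the Notations paragraph that all encountered functions are absolutely integrable, together with Fubini--Tonelli, covers the exchange of expectations and sums; for the derivatives one needs either uniform convergence of the differentiated series on compacta or a standard dominated-convergence argument using that $k$ lies in the Stein class of $p$ (so its first and second partial derivatives are integrable against $p$). I would flag these as the regularity conditions implicitly invoked, but not grind through them.
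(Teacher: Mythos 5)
Your proposal matches the paper's proof essentially line for line: substitute the Mercer expansion and its differentiated series into the four terms of $u_q$, regroup by $j$ to obtain $\lambda_j[\stein_q e_j(x)]^\top[\stein_q e_j(x')]$, read off positive definiteness from the positivity of the $\lambda_j$, and then split the expectation over independent $x,x'$ to get \eqref{equ:dpEej2}. The only difference is that you explicitly flag the term-by-term differentiation and Fubini exchange as regularity points to justify, which the paper quietly absorbs into its standing assumption in the Notations paragraph.
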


Finally, we are ready to establish the variational interpretation of $\S(p,q)$ 
that motivated this work, 
that is, it can be treated as
the maximum of $\E_{x\sim p}[\stein_q f(x)]$ when optimizing $f$ in the unit ball of RKHS $\H$ related to kernel $k(x,x')$.  

\begin{thm}
\label{thm:rkhs}
Let $\H$ be the RKHS related to a positive definite kernel $k(x,x')$ 
in the Stein class of $p$. 
%
Denote by $\vv \beta(x' ) =  \E_{x\sim p} [\stein_q k_{x'}(x)]$,
then
\begin{align}
\label{equ:beta2}
\S(p, q) = ||\vv \beta||_{\H^d}^2. 
\end{align}
Further, we have $\la \vv f, \vv \beta \ra_{\H^d}= \E_x [\trace(\stein_q  \vv f)]$ for $\vv f \in \H^d$, and hence
\begin{align}
\sqrt{\S(p, q)}
& = \max_{\vv f \in \H^d} \bigg\{ \E_{x} [\trace(\stein_q \vv f)] ~~~~ s.t.~~~~ ||\vv f ||_{\H^d} \leq 1 \bigg \} 
\label{equ:good}
\end{align}
where the maximum is achieved when $\vv f = \vv \beta/||\vv \beta||_{\H^d}$.
\end{thm}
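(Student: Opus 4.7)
The plan is to prove the three claims in order: first establish the inner-product identity $\la \vv f, \vv\beta\ra_{\H^d} = \E_x[\trace(\stein_q \vv f)]$, then specialize to $\vv f = \vv\beta$ to deduce $\|\vv\beta\|_{\H^d}^2 = \S(p,q)$, and finally invoke Cauchy--Schwarz to obtain the variational characterization.

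The main workhorse is the \emph{derivative reproducing property}: whenever $k$ is smooth enough to lie in the Stein class of $p$, each partial derivative $\partial_{x_i} k(\cdot, x)$ belongs to $\H$ and satisfies $\la f, \partial_{x_i} k(\cdot, x)\ra_\H = \partial_{x_i} f(x)$ for every $f \in \H$ (this follows from the reproducing property by differentiating under the inner product, which is legal since $\H$ embeds continuously into $C^1$ under our smoothness hypothesis). With this in hand, I would first argue that $\vv\beta \in \H^d$ by writing its $\ell$-th component as a Bochner integral $\beta_\ell = \E_x[\score_q(x)_\ell k(\cdot,x) + \partial_{x_\ell} k(\cdot, x)]$ of $\H$-valued quantities and verifying integrability from the standing $L^2$/integrability assumptions. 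Then, componentwise,
\begin{align*}
\la f_\ell, \beta_\ell\ra_\H
&= \E_{x}\big[\score_q(x)_\ell \la f_\ell, k(\cdot, x)\ra_\H + \la f_\ell, \partial_{x_\ell} k(\cdot, x)\ra_\H\big] \\
&= \E_{x}\big[\score_q(x)_\ell f_\ell(x) + \partial_{x_\ell} f_\ell(x)\big],
\end{align*}
and summing over $\ell$ produces $\E_{x}[\trace(\stein_q \vv f(x))]$, which is claim (2).

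Next, substituting $\vv f = \vv\beta$ gives $\|\vv\beta\|_{\H^d}^2 = \E_{x}[\trace(\stein_q \vv\beta(x))]$. Unfolding the definition of $\vv\beta$ and exchanging the two expectations via Fubini, the integrand becomes, after one round of differentiation,
\[
\sum_\ell \big[\score_q(x)_\ell \score_q(x')_\ell k(x,x') + \score_q(x)_\ell \partial_{x'_\ell} k(x,x') + \score_q(x')_\ell \partial_{x_\ell} k(x,x') + \partial_{x_\ell}\partial_{x'_\ell} k(x,x')\big],
\]
which collapses to exactly $u_q(x,x')$ using the symmetry of $k$. Hence $\|\vv\beta\|_{\H^d}^2 = \E_{x,x'\sim p}[u_q(x,x')] = \S(p,q)$ by Theorem~\ref{thm:kxx}, giving claim (1).

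Finally, Cauchy--Schwarz applied to (2) yields $\E_x[\trace(\stein_q \vv f)] = \la \vv f, \vv\beta\ra_{\H^d} \leq \|\vv f\|_{\H^d}\|\vv\beta\|_{\H^d}$, with equality iff $\vv f$ is a nonnegative multiple of $\vv\beta$; so the constrained maximum under $\|\vv f\|_{\H^d} \leq 1$ equals $\|\vv\beta\|_{\H^d} = \sqrt{\S(p,q)}$, attained at $\vv f = \vv\beta/\|\vv\beta\|_{\H^d}$, which is claim (3). The main technical obstacle is justifying the three interchanges (Bochner integral into the inner product, derivative into the inner product, Fubini in Step~3); each relies on the Stein-class/integrability hypotheses, and the derivative-into-inner-product step in particular requires the standard argument that $\partial_{x_i} k(\cdot, x) \in \H$ with the stated reproducing identity, which can be invoked from RKHS theory rather than reproved here.
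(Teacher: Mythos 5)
Your proposal is correct, but it reorganizes the argument in a genuinely different way from the paper. The paper proves \eqref{equ:beta2} first and \emph{directly} from the definition \eqref{equ:dpEdsq}: using Lemma~\ref{lem:basic2} it rewrites $\vv\beta(x') = \E_{x\sim p}[\stein_q k_{x'}(x)] = \E_{x}[(\score_q(x) - \score_p(x))\,k(x,x')]$ as a score-difference--weighted mean embedding, and then substitutes the reproducing identity $k(x,x') = \langle k(x,\cdot),k(x',\cdot)\rangle_\H$ into \eqref{equ:dpEdsq} to pull the two expectations into the inner product, yielding $\S(p,q) = \sum_\ell\langle\beta_\ell,\beta_\ell\rangle_\H$ without any appeal to Theorem~\ref{thm:kxx}; the inner-product identity $\langle\vv f,\vv\beta\rangle_{\H^d}=\E_x[\trace(\stein_q\vv f)]$ is then established separately via the derivative reproducing property, and the variational form follows from the duality $\|\vv\beta\| = \max_{\|\vv f\|\leq 1}\langle\vv f,\vv\beta\rangle$. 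You instead prove the inner-product identity first, then specialize $\vv f=\vv\beta$ and expand $\E_x[\trace(\stein_q\vv\beta(x))]$ back into the double expectation $\E_{x,x'}[u_q(x,x')]$, invoking Theorem~\ref{thm:kxx} to conclude $\|\vv\beta\|_{\H^d}^2=\S(p,q)$. Your route is more economical (it reuses Theorem~\ref{thm:kxx} and derives \eqref{equ:beta2} as a corollary of the pairing identity) and has the nice feature of making the need to verify $\vv\beta\in\H^d$ explicit via the Bochner-integral argument; the paper's route has the advantage that \eqref{equ:beta2} is obtained self-containedly from the defining formula \eqref{equ:dpEdsq}, independently of the $u_q$ representation, which also transparently explains why $\vv\beta$ is the relevant ``mean embedding'' of the score difference. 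Both require the same analytic interchanges (Fubini, Bochner-integral/inner-product, derivative reproducing), so neither is more technically demanding than the other.
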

Note that \eqref{equ:good} is slightly different from the definition in \citet{gorham2015measuring} which do not use the square root; 
we can take the square root off by optimizing within the ball of $|| \vv f ||_{\H^d}^2 \leq \S(p,q)$ instead.

\section{Goodness-of-fit Testing Based on KSD}
\label{sec:U}
The form in \eqref{equ:dpEu} allows efficient estimation of $\S(p,q)$ in practice. 
Given i.i.d. sample $\{x_i\}$ drawn from an unknown $p$ and the score function $\score_q(x)$, 
we can estimate $\S(p,q)$ by 
\begin{align}
\hat\S_u(p,q) = \frac{1}{n(n-1)}\sum_{1\leq i\neq j\leq n} u_q(x_i, x_j),   
\label{equ:SU}
\end{align}
where $\hat\S_u(p,q)$ is a form of $U$-statistics (``$U$" stands for unbiasedness), which provides a minimum-variance unbiased estimator for $\S(p,q)$ \citep{hoeffding1948class, serfling2009approximation}. 
We can also estimate $\S(p,q)$ using a $V$-statistic of form $\frac{1}{n^2} \sum_{i,j=1}^n u_q(x_i, x_j)$, 
which provides a biased estimator, but has the advantage of always being nonnegative since $u_q(x,x')$ is positive definite. 
We will focus on the $U$-statistic in this work because of its unbiasedness. 

\begin{thm}
\label{thm:uasym}
Let $k(x,x')$ be a positive definite kernel in the Stein class of $p$ and $q$. 
Assume the conditions in Proposition~\ref{pro:pos} holds, 
and $\E_{x,x'\sim p}[u_q(x,x')^2] < \infty $, we have 

1) If $p \neq q$, then $ \hat\S_u(p,q) $ is asymptotically normal with
$$
\sqrt{n} (\hat\S_u(p,q) - \S(p,q)) \overset{d}{\to}  \normal(0, \sigma_u^2), 
$$
where $\sigma_u^2 = \var_{x\sim p}(\E_{x'\sim p}[u_q(x,x')])$ and $\sigma_u^2 \neq 0$.  

2) If $p = q$, then we have $\sigma_u^2 = 0$  
(the $U$-statistics is degenerate) and 
\begin{align}
\label{equ:chi2}
n \hat\S_u(p,q) \overset{d}{\to} \sum_{j=1}^{\infty} c_j (Z_j^2 - 1), 
\end{align}
where $\{Z_j\}$ are i.i.d. standard Gaussian random variables, and
$\{c_{j}\}$ are the eigenvalues of kernel $u_q(x,x')$ under $p(x)$, that is, they are the solutions of 
$
c_j \phi_j(x) = \int_{x'} u_q(x,x') \phi_j(x') p(x') dx' 
$ for non-zero $\phi_j$. 
\end{thm}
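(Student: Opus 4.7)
My plan is to treat this as a direct application of Hoeffding's classical asymptotic theory for $U$-statistics of order two with symmetric kernel $u_q(x,x')$, handling the two cases separately according to whether the kernel is first-order degenerate under $p$. The moment hypothesis $\E[u_q(x,x')^2]<\infty$ is precisely what one needs to trigger the standard theorems.

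For Part~1, assuming $p \neq q$, I would verify that the first-order Hoeffding projection has strictly positive variance and then invoke the nondegenerate $U$-statistic CLT (e.g., Serfling, Chapter~5). Concretely, set $h_1(x) := \E_{x' \sim p}[u_q(x,x')] - \S(p,q)$. The Hoeffding decomposition gives
\[
\sqrt{n}\,(\hat\S_u(p,q) - \S(p,q)) \;=\; \frac{2}{\sqrt n}\sum_{i=1}^n h_1(x_i) + R_n,
\]
where the remainder $R_n$ is $o_P(1)$ under the assumed second moment and the lead term is a standard i.i.d.\ CLT, yielding asymptotic normality with a variance proportional to $\var_{x\sim p}(\E_{x'\sim p}[u_q(x,x')])$. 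The only nontrivial step is verifying $\sigma_u^2>0$. Here I would use the spectral expansion $u_q(x,x') = \sum_j \lambda_j [\stein_q e_j(x)]^\top[\stein_q e_j(x')]$ from Theorem~\ref{thm:mercer22} to write $h_1(x)+\S(p,q) = \sum_j \lambda_j [\stein_q e_j(x)]^\top c_j$ with $c_j = \E_p[\stein_q e_j]$, and argue that if $h_1$ were $p$-a.s.\ constant then, via Lemma~\ref{lem:basic2} and completeness of $\{e_j\}$ guaranteed by integral strict positive definiteness, the score difference $\vv\delta_{q,p}$ must vanish, forcing $p=q$ and contradicting the hypothesis.

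For Part~2, under $p=q$ I would first use Stein's identity (Lemma~\ref{lem:basic1}) together with Proposition~\ref{pro:steinrkhsIS} to deduce $\E_p[\stein_q e_j(x)] = 0$ for every eigenfunction $e_j$ of $k$; substituting into the spectral expansion of $u_q$ shows $\E_{x\sim p}[u_q(x,x')] \equiv 0$. Hence $\sigma_u^2 = 0$ and the $U$-statistic is first-order degenerate, so the $\sqrt n$ rate breaks down. I would then apply the standard limiting theorem for degenerate second-order $U$-statistics (e.g., Serfling Theorem~5.5.2 or Koroljuk--Borovskich). After performing the Mercer expansion of $u_q$ in $L^2(p)$ as $u_q(x,x') = \sum_j c_j\,\phi_j(x)\phi_j(x')$ with $\{\phi_j\}$ the orthonormal eigenfunctions of the integral operator $f \mapsto \int u_q(\cdot,x')f(x')p(x')\,dx'$, the theorem yields
\[
n\,\hat\S_u(p,q) \;\overset{d}{\to}\; \sum_{j=1}^\infty c_j(Z_j^2 - 1),
\]
where the recentering by $-1$ comes from the omission of the diagonal in the $U$-statistic (versus the corresponding $V$-statistic). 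The assumed $\E[u_q^2]<\infty$ makes $\sum_j c_j^2 < \infty$ and ensures the series converges in distribution.

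The main obstacle I anticipate is the nondegeneracy claim in Part~1: ruling out the pathology that $h_1(x)$ is a nonzero $p$-a.s.\ constant despite $p\neq q$. Proposition~\ref{pro:pos} only gives $\E[h_1]+\S(p,q) = \S(p,q) > 0$, which is not enough; resolving this cleanly requires the spectral structure of $u_q$ from Theorem~\ref{thm:mercer22} together with the strict positive definiteness of $k$, in order to translate ``$h_1$ constant'' back into ``$\vv\delta_{q,p}\equiv 0$''. Once this is handled, the remainder is a routine invocation of classical $U$-statistic limit theorems.
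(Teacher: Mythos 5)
Your overall structure (invoke classical $U$-statistic asymptotics from Serfling and reduce to checking degeneracy of the first-order Hoeffding projection) matches the paper's, and Part~2 is essentially the same argument: Stein's identity under $p=q$ gives $\E_{x'\sim p}[u_q(x,x')]\equiv 0$, hence degeneracy, and the degenerate-$U$-statistic limit theorem supplies the chi-squared series. The problem is in Part~1's nondegeneracy step. You propose to show that ``$\E_{x'\sim p}[u_q(x,x')]$ is $p$-a.s.\ constant'' plus the spectral expansion of $u_q$ and strict positive definiteness of $k$ forces $\vv\delta_{q,p}\equiv 0$, but the constancy by itself gives no leverage: the constant is simply $\S(p,q)>0$, and nothing in the spectral structure of $k$ rules out a nonzero constant. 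As sketched, the argument does not close.

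The paper resolves this by using the hypothesis that $k$ is in the Stein class of $q$ as well as $p$, which your proof never invokes. Because $k_{x'}(\cdot)$ is in the Stein class of $q$, the same Stein-identity calculation you applied in Part~2 shows $\E_{x\sim q}[u_q(x,x')]=0$ for every $x'$. Now if $\E_{x'\sim p}[u_q(x,x')]$ were a.s.\ equal to a constant $c$ (and since $p$ and $q$ share the same support $\X$, this holds both $p$-a.s.\ and $q$-a.s.), taking $\E_{x\sim q}$ and swapping the order of integration gives $c=\E_{x'\sim p}\,\E_{x\sim q}[u_q(x,x')]=0$. But then $\S(p,q)=\E_{x\sim p}\E_{x'\sim p}[u_q(x,x')]=c=0$, contradicting $p\neq q$ via Proposition~\ref{pro:pos}. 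So the nondegeneracy under the alternative comes from the Stein-class property under $q$ together with the discriminativity guaranteed by Proposition~\ref{pro:pos}, not from spectral completeness of the base kernel; you should add the missing appeal to the Stein class of $q$ to make Part~1 rigorous.
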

\begin{proof}
Using the standard asymptotic results of $U$-statistics in \citet[][Section 5.5]{serfling2009approximation}, we just need to check that $\sigma_u^2\neq 0$ when $p\neq q$ and $\sigma_u^2 = 0$ when $p =q$.  See Appendix for details. 
%
%
%
\end{proof}
%
%
 
Theorem~\ref{thm:uasym} suggests that $n\hat \S_u(p,q)$ has a well defined limit distribution under the null $p=q$, that is, $n\hat\S_u(p,q) <\infty $ with probability one, 
 but grows to $\infty$ at a $\sqrt{n}$-rate under any fixed alternative hypothesis $q\neq p$. 
 This suggests a straightforward goodness-of-fit 
 testing procedure: Denote by $F_{n\hat\S_u}$ the CDF of $n \hat\S_u$ under the null $p=q$, and set $\gamma_{1-\alpha}$ the $1-\alpha$ quantile of $F_{n\hat\S_u}$, i.e.,  $\gamma_{1-\alpha} = \inf \{s  \colon F_{n\hat\S_u} (s) \geq 1- \alpha\}$, then we reject the null 
 with significant level $\alpha$ if $n \hat\S_u \geq \gamma_{1-\alpha}$. 
 %
 \begin{pro}
 Assume the conditions in Theorem~\ref{thm:uasym}. 
 For any fixed $q\neq p$, the limiting power of the test that rejects the null $p=q$ when $n \hat\S_u(p,q) > \gamma_{1-\alpha}$ is one, that is, 
 the test is consistent in power against any fixed $q\neq p$. 
 \end{pro}
 
One difficulty in implementing this test is that the limit distribution in \eqref{equ:chi2} and its $\alpha$-quantile does not have analytic form unless $c_j = 0,$ or $1$. 
 Fortunately, the same type of asymptotics appears in many other classical goodness-of-fit tests, such as Cramer-von Mises test, Anderson-Darling test, as well as two-sample tests \citep{gretton2012kernel}.
As a consequence, a line of work has been devoted to approximating the critical values of \eqref{equ:chi2}, 
including bootstrap methods \citep{arcones1992bootstrap, huskova1993consistency, chwialkowski2014wild} and eigenvalue approximation \citep{gretton2009fast}. 

%

\begin{algorithm}[tb] %
\caption{Bootstrap Goodness-of-fit Test based on KSD}  \label{alg:test}
\begin{algorithmic}
\STATE \emph{Input:} Sample $\{x_i\}$ and score function $\score_q(x) = \nabla_x \log q(x)$. Bootstrap sample size $m$. 
\STATE \emph{Test:} $H_0$: $\{x_i\}$ is drawn from $q $  ~~~v.s ~~   $H_1$: $\{x_i\}$ is not drawn from $q$.  
\STATE 1. Compute $\hat \S_u$ by \eqref{equ:SU} and $u_q(x,x')$ as defined in Theorem~\ref{thm:kxx}. 
Generate $m$ bootstrap sample $\hat \S^*_u$ by \eqref{equ:bt}. 
\STATE 2. Reject $H_0$ with significance level $\alpha$ if the percentage of $\hat \S^*_u$ that satisfies $\hat \S^*_u > \hat \S_u$
is less than $\alpha$. 
\end{algorithmic}
\end{algorithm} 

In this work, we adopt the bootstrap method suggested in  \citet{huskova1993consistency, arcones1992bootstrap}: 
We repeatedly draw multinomial random weights $(w_{1}, \ldots w_{n}) \sim ~ \mathrm{Mult}(n~; ~  \frac{1}{n}, \ldots, \frac{1}{n})$, and calculate bootstrap sample 
\begin{align}\label{equ:bt}
 \hat  \S^*_u(p, q)  =   \sum_{i\neq j} (w_{i}  - \frac{1}{n}) (w_j -\frac{1}{n})u_q(x_i, x_j), 
\end{align}
and then calculate the empirical quantile $\hat \gamma_{1-\alpha}$ of  $n \hat  \S^*_u(p, q)$. 
The consistency of $\hat \gamma_{1-\alpha}$ for degenerate $U$-statistics has been established in \citet{arcones1992bootstrap, huskova1993consistency}. 
\begin{thm}[\citet{huskova1993consistency}]
 Assume the conditions in Theorem~\ref{thm:uasym}. 
 If $p =q$, then as the bootstrap sample size $m\to \infty$,  
$$
\sup_{s \in \RR} \big |~\prob( n \hat  \S^*_u   \leq s ~|~ \{x_i\}_{i=1}^n) ~ - ~\prob( n \hat\S_u \leq s) ~\big | \to 0, 
$$
that is, the bootstrap test attains the correct significance level asymptotically (consistent in level). 
\end{thm}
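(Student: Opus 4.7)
The plan is to prove bootstrap consistency by matching the spectral representations of $n\hat\S_u$ and $n\hat\S_u^*$. Apply Mercer's theorem to $u_q$ under $p$ to write $u_q(x,x') = \sum_{j\ge 1} c_j\phi_j(x)\phi_j(x')$ with $\{\phi_j\}$ orthonormal in $L^2(p)$ and $\sum_j c_j^2<\infty$. Under $p=q$, Lemma~\ref{lem:basic1} gives $\E_p[u_q(x,X')]=0$, so every $\phi_j$ with $c_j\ne 0$ is $p$-centered. Substituting into \eqref{equ:SU} expresses $n\hat\S_u$ (up to an $O(1/n)$ diagonal correction) as $\sum_j c_j(S_{n,j}^2 - T_{n,j})$, where $S_{n,j}=\tfrac{1}{\sqrt n}\sum_i\phi_j(x_i)$ and $T_{n,j}=\tfrac{1}{n}\sum_i\phi_j(x_i)^2$. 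The multivariate CLT and LLN give, for every fixed $K$, $(S_{n,1},\ldots,S_{n,K})\Rightarrow (Z_1,\ldots,Z_K)$ i.i.d.\ standard normal and $T_{n,j}\to 1$, which recovers the $\sum_j c_j(Z_j^2-1)$ limit of Theorem~\ref{thm:uasym} after truncating the tail via $\sum c_j^2<\infty$.

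Plugging the same expansion into the bootstrap form \eqref{equ:bt} yields an analogous decomposition $n\hat\S_u^* = \sum_j c_j(S_{n,j}^{*2} - T_{n,j}^*)$, where $S_{n,j}^*,T_{n,j}^*$ are built from the centered multinomial weights in place of the empirical measure. Conditionally on $\{x_i\}$, the centered multinomial vector obeys a multivariate CLT whose limit covariance is the identity on the orthogonal complement of the all-ones direction; because each $\phi_j$ is $p$-centered and $\tfrac{1}{n}\sum_i\phi_j(x_i)\to 0$ by LLN, the lost direction contributes negligibly and $(S_{n,1}^*,\ldots,S_{n,K}^*)\Rightarrow (Z_1,\ldots,Z_K)$ conditionally for almost every data sequence. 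A direct moment calculation using the multinomial covariance gives $T_{n,j}^*\to 1$ conditionally. Combined with a tail bound $\E[(\sum_{j>K}c_j(S_{n,j}^{*2}-T_{n,j}^*))^2\mid\{x_i\}]\le C\sum_{j>K}c_j^2$, obtained from the multinomial covariance together with an empirical Parseval identity for $u_q$ that converges to its $L^2(p)$ analog, Slutsky delivers conditional weak convergence of $n\hat\S_u^*$ to $\sum_j c_j(Z_j^2-1)$ for almost every data sequence.

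The final step upgrades pointwise CDF convergence to the $\sup_s$ bound. The limit law $\sum_j c_j(Z_j^2-1)$ has a continuous distribution function (an infinite convolution of non-atomic centered chi-squares with summable variances), so Polya's theorem turns pointwise CDF convergence into uniform convergence, and the triangle inequality against the limit law of $n\hat\S_u$ from Theorem~\ref{thm:uasym} gives $\sup_s|\prob(n\hat\S_u^*\le s\mid\{x_i\}_{i=1}^n)-\prob(n\hat\S_u\le s)|\to 0$ almost surely, which in particular yields the convergence as stated. The main obstacle is the uniform-in-$K$ tail control of $R_K^* = \sum_{j>K}c_j(S_{n,j}^{*2}-T_{n,j}^*)$: the eigenfunction values $\phi_j(x_i)$ are data-dependent and the multinomial weights are exchangeable but not independent, so the required second-moment bound must come from carefully matching the multinomial covariance against an empirical Parseval identity that holds uniformly in $n$ for almost every data sequence. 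This is the technically delicate heart of the argument of \citet{arcones1992bootstrap} and \citet{huskova1993consistency}; every other step reduces to standard conditional CLT and Slutsky-type manipulations.
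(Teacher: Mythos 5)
This is a cited theorem: the paper imports it from \citet{huskova1993consistency} (and \citet{arcones1992bootstrap}) and offers no proof of its own, only the remark that bootstrap consistency for degenerate $U$-statistics ``has been established'' in those references. There is therefore no internal argument to match your proposal against, and the comparison has to be made with the literature the paper cites.

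Your sketch is a faithful road map of that literature's argument, and the individual claims are correct. The Mercer expansion of $u_q$ under $p$, the observation that degeneracy under $p=q$ (which the paper establishes via $\vv\beta\equiv 0$ in the proof of Theorem~\ref{thm:uasym}) forces the eigenfunctions attached to nonzero eigenvalues to be $p$-centered (take the $L^2(p)$ inner product of $\E_{X'\sim p}[u_q(x,X')]\equiv 0$ with $\phi_k$ to get $c_k\E_p[\phi_k]=0$), the decomposition $n\hat\S_u = \tfrac{n}{n-1}\sum_j c_j(S_{n,j}^2-T_{n,j})$, the conditional multivariate CLT showing $\mathrm{Var}(S_{n,j}^*\mid\{x_i\})\to 1$ and $\mathrm{Cov}(S_{n,j}^*,S_{n,k}^*\mid\{x_i\})\to 0$ almost surely, $T_{n,j}^*\to 1$ by a conditional moment calculation, and the Polya-theorem upgrade to $\sup_s$ (legitimate because $\sum_j c_j(Z_j^2-1)$ is atomless whenever $u_q\not\equiv 0$, and the statement is vacuous otherwise) are all as in Huskova--Janssen. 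You also correctly identify the technically delicate heart of the proof: the uniform-in-$K$ conditional second-moment bound on the tail $R_K^*=\sum_{j>K}c_j(S_{n,j}^{*2}-T_{n,j}^*)$, which requires matching the multinomial covariance against an empirical Parseval identity uniformly in $n$ along almost every data sequence. You defer this step to the cited papers rather than executing it, so what you have is an accurate high-level reconstruction rather than a self-contained proof; that said, since the paper itself states the theorem by citation and never re-derives it, this is the appropriate level of detail.
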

It is important to note, on the other hand, that the more usual bootstrap, such as $\sum_{i\neq j} w_i w_j u_q(x_i, x_j)$, may not work for degenerate $U$-statistics as discussed in \citet{arcones1992bootstrap}.

This bootstrap test is summarized in Algorithm~\ref{alg:test}; 
its cost is $O(mn^2)$ where $n$ is the size of the sample $\{x_i\}$ and $m$ the bootstrap sample size. 
A more computationally efficient, but less statistically powerful, method can be constructed based on the following linear estimator: 
\begin{align}
\label{equ:linear}
\hat \S_{lin}  = \frac{1}{\lfloor n/2\rfloor} \sum_{i=1}^{\lfloor n/2\rfloor} u_q(x_{2i-1}, x_{2i}), 
\end{align}
which 
has a zero-mean Gaussian limit under the null. 
This gives a test with only $O(n)$ time complexity: 
reject the null if $\hat \S_{lin} > \hat\sigma z_{1-\alpha}$, where $z_{1-\alpha}$ is the $1-\alpha$ quantile of the standard Gaussian distribution, and $\hat \sigma$ the standard deviation of 
$\{ u_q(x_{2i-1}, x_{2i})\}$. 
This test, however, tends to perform much worse than the $U$-statistic based test as we show in our experiments. 
Further computation-efficiency trade-off between the linear- and $U$-statistic can be obtained by block-wise averaging; see \citet{ho2006two, zaremba2013b} for details. 

\section{Related Methods}
\label{sec:related}
We discuss the connection with Fisher divergence and maximum discrepancy measure (MMD). 

\subsection{Connection with Fisher Divergence} \label{sec:fisher}
Fisher divergence, also known as Fisher information distance \citep{johnson2004information}, is defined as 
\begin{align}
\label{equ:fisherDef}
\mathbb F(p,q)  = \E_{x} \big [   ||\nabla_x \log p (x) - \nabla_x \log q(x) ||^2_2 ],   
\end{align}
that is, it is the $\mathcal L^2(p)$ norm of $\score_q(x) - \score_p(x)$. 
An immediate connection is made by noting that $\mathbb F(p,q)$ 
can be treated as a special case of $\S(p,q)$ defined in
 \eqref{equ:dpEdsq} 
with  $k(x,x') = \ind[ x = x']$, or a RBF kernel with bandwidth $h\to 0$; in this sense, we can also think KSD as a kernelized version of Fisher divergence. 
We can establish the follow inequalities between $\mathbb F(p,q)$ and $\S(p,q)$: 
\begin{thm}\label{thm:FS}
1) Following Definition \eqref{equ:dpEdsq} and \eqref{equ:fisherDef}, we have
\begin{align}\label{equ:SF}
|\S(p,q)|  \leq \sqrt{\E_{x,x'\sim p} [k(x,x')^2]} \cdot \mathbb F(p,q). 
\end{align}
2) In addition, if $k(x,x')$ is positive definite and in the Stein class of $p$, 
and $\score_q - \score_p\in \H^d$, we have, for $p\neq q$, 
\begin{align}
\label{equ:FS}
\sqrt{ \S(p,q)} \geq  \mathbb F(p,q )  ~/~   || \score_q - \score_p ||_{\H^d} . 
\end{align}
\end{thm}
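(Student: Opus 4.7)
The plan is to handle the two inequalities separately, using Cauchy--Schwarz for part (1) and the RKHS variational characterization from Theorem~\ref{thm:rkhs} for part (2).

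For part (1), I would start from the definition $\S(p,q) = \E_{x,x'\sim p}[\vv\delta_{q,p}(x)^\top k(x,x') \vv\delta_{q,p}(x')]$ and apply Cauchy--Schwarz twice. First, pointwise in $\RR^d$ bound $|\vv\delta_{q,p}(x)^\top k(x,x') \vv\delta_{q,p}(x')| \le |k(x,x')|\,\|\vv\delta_{q,p}(x)\|_2\,\|\vv\delta_{q,p}(x')\|_2$. Then apply Cauchy--Schwarz to the expectation under the product measure $p\otimes p$, separating $|k(x,x')|$ from the product of norms:
\[
|\S(p,q)| \le \sqrt{\E_{x,x'}[k(x,x')^2]}\cdot \sqrt{\E_{x,x'}\bigl[\|\vv\delta_{q,p}(x)\|_2^2\,\|\vv\delta_{q,p}(x')\|_2^2\bigr]}.
\]
Because $x,x'$ are i.i.d.\ under $p$, the second expectation factors as $\bigl(\E_x\|\vv\delta_{q,p}(x)\|_2^2\bigr)^2 = \mathbb F(p,q)^2$, and taking the square root yields exactly the bound in \eqref{equ:SF}.

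For part (2), the natural tool is the variational representation in Theorem~\ref{thm:rkhs}:
\[
\sqrt{\S(p,q)} \;=\; \max_{\vv f\in\H^d,\;\|\vv f\|_{\H^d}\le 1} \E_p[\trace(\stein_q \vv f)].
\]
By Lemma~\ref{lem:basic2}, $\E_p[\trace(\stein_q \vv f)] = \E_p[(\score_q - \score_p)^\top \vv f]$, which is the $\mathcal L^2(p)$ inner product of $\vv f$ with $\vv\delta_{q,p} = \score_q - \score_p$. The key trick is to use the hypothesis $\vv\delta_{q,p} \in \H^d$ to plug in the explicit feasible choice $\vv f^{\star} = \vv\delta_{q,p}/\|\vv\delta_{q,p}\|_{\H^d}$, which has unit $\H^d$-norm. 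Evaluating the objective at $\vv f^\star$ gives $\E_p\|\vv\delta_{q,p}\|_2^2 / \|\vv\delta_{q,p}\|_{\H^d} = \mathbb F(p,q)/\|\score_q-\score_p\|_{\H^d}$, and since this is only a particular feasible point it lower-bounds the maximum, which is \eqref{equ:FS}.

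The only subtlety I expect is ensuring the lower bound is well-posed: the denominator must be nonzero and $\vv f^{\star}$ must actually live in $\H^d$. The second is precisely the standing assumption $\score_q-\score_p\in\H^d$. For the first, since $p\neq q$ with continuous densities implies $\vv\delta_{q,p}$ is not the zero function, and elements of an RKHS have zero $\H^d$-norm iff they are the zero function, we get $\|\score_q-\score_p\|_{\H^d}>0$ and the division is legitimate. Part (1) is essentially routine once the two Cauchy--Schwarz inequalities are lined up correctly; the main conceptual step is recognizing that part (2) is exactly a one-point witness to the variational supremum provided by Theorem~\ref{thm:rkhs}, with Lemma~\ref{lem:basic2} converting Stein's operator into the score-difference form that matches $\mathbb F(p,q)$.
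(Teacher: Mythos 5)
Your proposal is correct and takes essentially the same route as the paper: part (1) is two applications of Cauchy--Schwarz (you apply the pointwise one first and then the $L^2(p\otimes p)$ one, the paper does them in the opposite order, which is immaterial), and part (2) is the same one-point witness $\vv f^\star = (\score_q-\score_p)/\|\score_q-\score_p\|_{\H^d}$ plugged into the variational form of Theorem~\ref{thm:rkhs} after rewriting $\E_p[\trace(\stein_q\vv f)]$ via Lemma~\ref{lem:basic2}. Your added remark verifying $\|\score_q-\score_p\|_{\H^d}>0$ from $p\neq q$ is a small but welcome check that the paper leaves implicit.
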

\begin{proof}
\eqref{equ:SF} is a simple result of Cauchy-Schwarz inequality, and \eqref{equ:FS} can be obtained by taking 
$f = (\score_q - \score_p)/|| \score_q(x) - \score_p(x) ||_{\H^d}$ in \eqref{equ:good}. 
See Appendix. 
\end{proof}
\eqref{equ:SF} suggests that the convergence in Fisher divergence is stronger than that in KSD. 
In fact, using Stein's method, \citet{ley2013stein} showed that Fisher divergence is stronger than most other divergences, including KL, total variation and Hellinger distances.

In addition, we can also represent $\mathbb F(p,q)$ in a variational form similar to \eqref{equ:good} but with $\vv f$ optimized over the unit ball of the intersection of the unit ball in $\mathcal L^2(p)$ space and the Stein class of $p$, which is larger than the ball of $\H^d$ and includes discontinuous, non-smooth functions; see Proposition~A.1 in Appendix.  
%
\fullversion{
\begin{pro}
\label{pro:FisherVar}
Let $\mathcal F(p) = \mathcal L^2(p)  \cap \mathcal S(p)$, where $ \mathcal S(p)$ represents the Stein class of $p$, then we have 
\begin{align*}
 \sqrt{\mathbb F(p,q)} \geq  & \max_{\vv f \in \mathcal F(p)^d} \bigg\{ \E_{p}[\trace(\stein_q \vv f(x))] \\
& ~~~~~~~~~~~~~~~~~~~~~~~~~~~~~~~~~ s.t.~~~ \E_p[ || \vv f(x)||^2_2 ] \leq 1 \bigg\}.  
\end{align*}
and the equality holds when $\score_q - \score_p \in \mathcal F(p)^d$. 
\end{pro}
Note that $\mathcal L^2(p)$ is larger than the Stein class and RKHS, and includes discontinuous, non-smooth functions, and hence we need to ensure $\vv f$ is in the Stein class explicitly. 
}


Despite the connections, the critical disadvantage of Fisher divergence compared to KSD 
is that the computationally convenient representation \eqref{equ:dpEu} no longer holds for Fisher divergence, because its corresponding kernel $\ind[ x = x']$ is not differentiable. 
Therefore, we can not estimate $\mathbb F(p,q)$ using the $U$-statistic in \eqref{equ:SU}. Instead, estimating $\mathbb F(p,q)$ seems to be substantially more difficult. 
To see this, note that 
 \begin{align}
\mathbb F(p,q) 
&=  \E_{x\sim p} [ || \score_q(x)||^2_2  - 2 \score_p(x)^\top \score_q(x)  + ||\score_p(x)||_2^2 ] \notag \\
&= \E_{x\sim p}  [ \phi_q(x) ]  ~+ ~ \E_{x\sim p} ||\score_p(x)||_2^2, \label{equ:fishergood}
\end{align}
where $\phi_q(x) = ||\score_q(x)||^2_2  + 2 \trace(\nabla_x \score_q(x))$ and is obtained by applying Stein's identity on the cross term. 
Note that although the first term $\E_{x\sim p}  [ \phi_q(x) ]$ in \eqref{equ:fishergood} can be estimated by the empirical mean of $\phi_q(x)$ (which only depends on $\score_q$) under sample $\{x_i\}\sim p$, 
the second term $\E_{x\sim p} ||\score_p(x)||_2^2$ is more difficult to estimate, since it depends on the score function $\score_p(x)$ of the unknown $p(x)$, and hence requires a kernel density estimator for $p(x)$; see \citet{hall1987estimation, birge1995estimation}. 
We should point out that similar difficult ``constant" terms appear in other common discrepancy measures such as KL divergence and $\alpha$-divergence \citep[e.g.,][]{Krishnamurthyicml15}. 
For this reason, KSD provides a much more convenient tool for goodness-of-fit tests than the other discrepancies. 

 

Meanwhile, Fisher divergence still has the advantage of being
independent of the normalization constants of $p$ and $q$, 
and provides a useful tool in cases when it does not require evaluating the term $
\score_p (x)$.
For example,  Fisher divergence has been widely used for parameter estimation, finding the optimal $q(x)$ that best fits a sample $\{x_i\}$ by minimizing $\mathbb F(p,q)$; 
this yields the score matching methods developed in both parametric \citep{hyvarinen2005estimation, lyu2009interpretation} and non-parametric \citep{sriperumbudur2013density} settings.  

\fullversion{
\subsection{Classical Goodness-of-fit Tests }
\label{sec:neyman}
Likelihood based methods test whether $\{x_i\}$ is drawn from $q(x)$ by
considering a set of parametric alternatives $q(x,\theta),~~ \theta \in \Theta$, such that $q(x) = q(x,\theta)$ if and only if $\theta = \theta_0$. 
Let $\hat \theta$ be the maximum likelihood estimator on sample $\{x_i\}$. 
Three typical test statistics are commonly used \citep[e.g.,][]{lehmann2006testing}: 
\begin{align*}
& \text{Likelihood Ratio:} &&L_n = 2\log ( q(X | \hat \theta)/ q(X | \theta_0)) \\
& \text{Wald Score:}  &&   W_n = 
 (\hat\theta - \theta_0)^\top H(\hat\theta) (\hat \theta - \theta_0) \\
& \text{Rao Score:}  &&   
R_n = S(\theta_0)^\top H(\theta_0)^{-1} S(\theta_0)
\end{align*}
where $\log q(X|\theta)=\prod_i\log q(x_i|\theta)$, 
$S(\theta) =   \nabla_\theta \log q(X|\theta)$ and 
$H_n = - \nabla_{\theta,\theta}\log q(X | \theta)$, respectively. 
All of the three statistics have a $\chi^2$ asymptotic distribution with the degree of freedom equaling the number of parameters $\theta$.  
Compared to our method, these methods require defining a parametric alternative, and more critically, need to estimate $\hat \theta$, or evaluate the likelihood or its gradient, which are computationally intractable in many practical cases. 

\myempty{
The weighted quadratic tests can be treated as special cases of Rao score test; it considers the following set of alternatives, 
$$q(x, \theta) = q(x)\exp(\sum_{j=1}^J \theta_j \psi_j(x) ) /  Z(\theta)),$$
where $\{\phi_j(x)\}$ is a set of orthonormal ``perturbation" directions with $q(x)$, that is, $ \int \psi_j(x) \psi_{j'} (x) q(x) dx = \ind[j = j']$, and satisfies
$\E_q(\psi_j(x)) = 0$, 
and $Z(\theta)$ the normalization constant. 
It then consider the weighted quadratic statistics
$$
\hat T  =  n \sum_{j=1} a_j \hat \E(\psi_j(x))^2,  ~~~ \text{where}~~~ \hat \E(\psi_j(x)) = \frac{1}{n}\sum_i \psi_j(x_i),
$$
which has an asymptotic distribution of form $\sum_j a_j Z_j^2$, where $Z_j$ are i.i.d. standard Gaussian random variables. 
Weighted quadratic tests include many methods as special cases, 
when $a_j=1$ for a finite set $j = 1,\ldots, J$, it reduces to Neyman's smooth test. 
It is 

To avoid the need of calculating $\E_q(\psi_j(x))$, Neyman's test takes $\psi_1 =1$ and restricts $\{\psi_j\}$ to be a series of orthonormal functions w.r.t. $q(x)$, that is, 
$ \int \psi_j(x) \psi_{j'} (x) q(x) dx = \ind[j = j']$. Then we have $\E_q(\psi_j(x)) = 0$, and hence the testing statistics reduces to $T_j =  \hat \E(\psi_j(x))$, and 
$n \hat T $ can be shown to the $\chi^2$ distribution with $J$ degrees of freedom. 
\citet{neyman1937smooth} showed that this test asymptotically maximize minimum (and average) power against the parametric alternative $\{q(x|\theta), ~\theta \neq 0\}$. 
Neyman's smooth test was originally considered under uniform null, under which Legendre polynomials are considered.  
However, Neyman's test only allows a finite number of $J$ and casts the importance problem of selecting the optimal $J$; 
in practice, $J$ is often restricted to small number such as $J = 2,3,4$, or selected by data driven approaches such as those based on BIC \citep[see e.g.,][]{fan1996test, ledwina1994data}. 
}

Neyman's smooth test  \citep{neyman1937smooth} can be treated as a special case of Rao score test; it considers the following set of alternatives, 
$$q(x, \theta) = q(x)\exp(\sum_{j=1}^J \theta_j \psi_j(x) ) /  Z(\theta)),$$
where $\{\psi_j(x)\}$ is a set of zero-mean, orthonormal ``perturbation" directions w.r.t. $q(x)$, that is, 
$\E_q(\psi_j(x)) = 0$ and
$ \int \psi_j(x) \psi_{j'} (x) q(x) dx = \ind[j = j']$, 
and $Z(\theta)$ the normalization constant. In this case, the Rao score reduces to
$$
\hat T  =  n \sum_{j=1}^J  \hat \psi_j ^2,  ~~~ \text{where}~~~ \hat  \psi_j = \frac{1}{n}\sum_{i=1}^n \psi_j(x_i),
$$
which has an asymptotic $\chi^2$ distribution with $J$ degrees of freedom. 
\citet{neyman1937smooth} showed that this test asymptotically maximizes the minimum (and average) power against the parametric alternative $\{q(x|\theta), ~\theta \neq 0\}$; 
the disadvantage, however, is that it only allows a  finite number $J$ of perturbation directions;
in practice, $J$ is often restricted to a small number $J = 3$ or $4$, or selected by data driven approaches such as those based on BIC \citep[see e.g.,][]{fan1996test, ledwina1994data}. 

The weighted quadratic tests generalize Neyman's approach by assigning a nonnegative weight $a_j$ on each perturbation direction $\psi_j(x)$, making it possible to use an infinite number of directions, 
\begin{align*}
\hat T_w = n \sum_{j=1}^\infty a_j \hat \psi_j^2,   
\end{align*}
whose asymptotic distribution is $\sum_{j=1}^\infty a_j Z_j^2$ with $Z_j$ being i.i.d. standard Gaussian random variables. 
The challenge, however, is to choose $a_j$ such that the infinite sum can be explicitly evaluated. 
This can be done in simple cases such as when $q$ is uniform,
leading to empirical distribution function (EDF) based tests such as Cramer-von Mises test and Anderson-Darling statistics for difference choices of $\{a_j\}$ and $\{\psi_j\}$.
However, we are not aware of general principles like our method that construct infinite sums for generic distributions $q(x)$. 

Our test can be treated using a set of \emph{non-orthonormal} perturbation directions $\stein_q e_j(x)$: 
$$
q(x, \theta) = q(x) \exp\big[\sum_j \theta_j^\top  \stein_q e_j(x) \big], 
$$
with test statistics $\hat\S_u = \sum_j \lambda_j (||\stein_q e_j(x)||^2-1)$. 
Note that orthonormalizating $\stein_q e_j(x)$ w.r.t. $q(x)$ gives   
the eigenfunctions $\phi_j(x)$ of $u_q(x,x')$, and derives the asymptotic distribution in \eqref{equ:chi2}. 
}

\subsection{Maximum Mean Discrepancy \& Two-sample Tests} \label{sec:mmd}
Closely related to goodness-of-fit tests are  two sample tests, which test whether two i.i.d. samples $\{x_i\}$ and $\{y_i\}$ are drawn from the same distribution. In principle, one can turn a goodness-of-fit test into a two sample test by drawing $\{y_i\}$ from $q(x)$. However, it is often difficult to draw exact i.i.d. samples for practical models, and furthermore MCMC sampling may be computationally expensive, suffer from the convergence problems, and introduce undesired correlations. 
When the MCMC approximation is poor, the two sample test would reject the null even when $p = q$ (inconsistent in level). 

Maximum Mean Discrepancy \citep{gretton2012kernel} is a nonparametric distance measure widely used for two sample tests, defined as
\begin{align*}
{\mathbb M}(p, q) = \max_{h \in \H } \big \{  \E_{p}[h(x)] - \E_q[h(x)]   ~~~s.t.~~~ || h||_\H \leq 1\big  \}, 
\end{align*}
where $\H$ is the RKHS of kernel $k(x,x')$. 
\citet{gretton2012kernel} showed that $\mathbb M (p,q)$ can be rewritten into 
\begin{align}\label{equ:mmd}
\mathbb M(p,q) = 
\E[k(x,x') + k(y,y') - 2 k(x,y') ],
\end{align}
where $x,x'$ and $y, y'$ are i.i.d. draws from $p$ and $q$, respectively. 
Therefore, $\mathbb M(p,q)$ can be empirically estimated based on sample $x_i \sim p$ and $y_i \sim q$ using $U$- or $V$- statistics, making it a useful tool for two sample tests.  
Our KSD, on the other hand, is better estimated with sample $x_i \sim p$ and the score function $\score_q$ and hence suitable for goodness-of-fit tests. 
Finally, by comparing \eqref{equ:mmd} with \eqref{equ:dpEu} and noting that $\E_{x\sim q}[u_q(x,x')]=0$,
 we can consider KSD as a special MMD with kernel $u_q(x,x')$;
 the key difference is that kernel $u_q(x,x')$ depends on $q$, 
making KSD asymmetric. 
%
\myempty{
MMD and KSD can be related to each other using Stein's equation, or the inverse of the Stein's operator. For any $h\in \H$, let $f = \stein_q^{-1} h$ be the solution of 
$$
 h(x) - \E_q[h(x)] = \stein_q f_h(x). 
$$
Denote by $||\stein_q ||_\H $ and $||\stein_q^{-1}||_\H$ the operator norm of $\stein_q$ and $\stein_q^{-1}$, respectively, that is, 
$$
|| \stein_q || = \max_{f } || \stein_q f ||_\H / || f ||_\H, ~~~ 
|| \stein_q^{-1} || = \max_{f} || \stein_q^{-1} f ||_\H / || f ||_\H. 
$$
Then we have 
\begin{pro}
$$
\frac{1}{||\stein_q ||_\H }  \S(p,q)  \leq \mathbb M(p,q) \leq || \stein_q^{-1}||_\H \S(p,q) 
$$
\end{pro}
\begin{proof}
Let $\stein_\H $ be the set of $f_h$ for $h\in \H$ and $|| h||_\H \leq 1$,that is, 
$$
\stein_\H^{-1} = \big \{  f_h \colon  h(x) - \E_q[h(x)] = \stein_q f_h(x), ~~ h\in \H, ~ || h ||_\H \leq 1  \big \}, 
$$
then 
$$
\mathbb M(p,q) = \max_{ f \in \stein_\H^{-1}}   \big \{ \E_p [\stein_q f (x)]  \big \}. 
$$

Or equivalently,  
$$
\S(p,q) = \max_{h \in \stein_\H} \big\{  \E_p[h(x)] - \E_q[h(x)]  ~~~ s.t.~~~  ||\stein_q^{-1} h ||_\H \leq 1\big\}. 
$$
where $\stein_q^{-1}$ represents the inverse of the Stein's operator, and $\stein_q^{-1} h $ is the solution $f_h$ of 
$ h(x) - \E_q[h(x)] = \stein_q f_h(x).$
Denote by $|| \stein_q ||_\H$ the norm of linear operator $\stein_q$ that satisfy $|| \stein_q ||_\H =  \max_{f \in \H \colon || f||_\H \leq 1}\{ || \stein_q f  ||_\H  \}$
Therefore, 
$$
\S(p,q) \leq   ||\stein_q ^{-1}|| \mathbb   M(p,q)
$$

\begin{align*}
\S(p,q) 
= \max_{h \in \stein_\H} \big\{\frac{1}{||\stein_q^{-1} h ||_\H}  \E_p[h(x)] - \E_q[h(x)]  \big\}
\geq 
 \max_{h \in \stein_\H} \big\{\frac{1}{||\stein_q^{-1}||_{\H}  || h ||_\H}  \E_p[h(x)] - \E_q[h(x)]  \big\} 
 = \frac{1}{|| \stein^{-1}_q||_\H} \mathbb M(p,q). 
\end{align*}

\begin{align*}
\mathbb M(p,q) 
= \max_{h \in \H} \big\{\frac{1}{|| h ||_\H}  \E_p[h(x)] - \E_q[h(x)]  \big\}
\geq 
 \max_{h \in \stein_\H} \big\{\frac{1}{||\stein_q||_{\H}  || \stein_q^{-1}h ||_\H}  \E_p[h(x)] - \E_q[h(x)]  \big\} 
 = \frac{1}{|| \stein_q||_\H}  \S(p,q). 
\end{align*}
Therefore, 
$$
\frac{1}{||\stein_q ||_\H }  \S(p,q)  \leq \mathbb M(p,q) \leq || \stein_q^{-1}||_\H \S(p,q) 
$$
\end{proof}
}


\myempty{
\section{Extensions to Discrete Models}
\label{sec:disc}
Our method requires densities to be continuously differentiable; this, however, is not a substantial restriction, since 
any density can be approximated arbitrarily well by a ``smoothed", infinitely differentiable function via the Weierstrass transforms, that is, by convolving with a Gaussian densities \citep[e.g.,][]{zayed1996handbook}. 
Specially, given that a un-smooth density $q(x)$, we consider its smoothed version $q_t(x) =  \int  \phi(x + t \epsilon )q(x) d\epsilon,$ and correspondingly $p_t(x)$ for $p(x)$. 
Then given a sample $\{x_i\} \sim p(x)$, we generated a new sample $\{y_i\}$ by $y_i = x_i + t \epsilon$, where $\epsilon$ is drawn from standard Gaussian. 
The testing between $q$ and $\{x_i\}$ is then transformed to that between $q_t(x)$ and $\{y_i\}$. We should use a small $t$ to ensure that $q_t(x)$ is close to $q(x)$ and hence there is no significant lose of power in the test.

Remarkable, this idea also applies even for discrete models in which case $q(x)$ denotes a probability table under a finite set $\X$. In this case we add Gaussian noise to the discrete variables to turn it into continuous variables
However, directly implementing this idea has two difficulties: 

1. The choice of noise variance is critical; a small variance causes poor estimation of the discrepancy, while a large variance causes lose of discriminant power (distributions are more similar when perturbed with large gaussian noise).  

2. We need to calculate the score function for $x$, with probability density $p(x) = \sum_h p(x, h) = \sum_{h} \phi(x -h 0, \Sigma) p(h)$, this can be difficult when $p(h)$ is intractable (e.g., when $p(h)$ is a Ising model). 

In the following, we show that by using a smart choice of Gaussian noise, one can apply this idea on Ising models to avoid computational intractability. 

\begin{mydef}[Ising Model]
Assume $\X = \{-1, +1\}^d$, an Ising graphical model defines the following probability for $x\in \X$, 
\begin{align}
\label{equ:ising}
q(x)  = \exp(\frac{1}{2}x^\top A x + b^\top x - \log Z), ~~~ Z = \sum_{x\in \X}  \exp(\frac{1}{2}x^\top A x + b^\top x). 
\end{align}
where $Z$ is the normalization constant called partition function, and is generally intractable to calculate, making directly evaluating the likelihood of $q(x)$ infeasible. 
We will assume $A$ is positive definite without lose of generality, since modifying the diagonal elements of $A$ does not change the probability. 
\end{mydef}

\begin{pro}
Assume $A$ is positive definite. 
For any discrete probability $p(x)$ and $q(x)$, $x\in \X$, denote by $\tilde p_{b,A}(y)$ and $\tilde q_{b,A}(y)$ the continuous density of $y = b + Ax + A^{1/2} \epsilon$, $y\in \RR^d$, where $x$ is drawn from $p$ and $q$, respectively, and $\epsilon$ is a $d\times1$ i.i.d. standard Gaussian noise. 
Then 

1).  $\tilde p_{b,A}(y) = \tilde q_{b,A}(y), ~ \forall y \in \RR^{d}$ ~ if and only if ~ $p(x) = q(x), ~ \forall x \in \X$. 

2). If $q(x)$ is the Ising model as defined in \eqref{equ:ising}, we have 
\begin{align}
\tilde q_{b,A}(y) \propto \exp[-\frac{1}{2}(y - b) A^{-1} (y - b)] \prod_{\ell\in [d]}[\exp(y^\ell) + \exp(-y^\ell)], 
\label{equ:qbA}
\end{align}
and hence 
$$\score_{\tilde q_{b,A}}(y) =\nabla_y \log \tilde q_{b,A}(x)=  - A^{-1} (y - b) 
+ \sum_{\ell\in [d]} \frac{\exp(y^\ell)-\exp(-y^\ell))}{\exp(y^\ell)+\exp(-y^\ell)}. 
$$
\end{pro}
\begin{proof}
1). 
Note that $\tilde p_{b,A}$, $\tilde q_{b,A}$ are Gaussian mixtures, centered at $b+Ax$, $x\in \X$ and with mixture weights $p(x)$, $q(x)$, respectively. Note that 
there is a one to one map from $x\in \X$ to the mixture centers since $A$ is assumed to invertible. Therefore, $\tilde p_{b,A}$ and $\tilde q_{b,A}$ equal if and only if their mixture weights $p(x)$ and $q(x)$ equal. 

2). When $q(x)$ is the Ising model in \eqref{equ:ising}, then the joint distribution of $(x,y)$ is
\begin{align*}
q(x,y) 
& \propto \exp(-\frac{1}{2}(y -  b - Ax)^{T} A^{-1} ( y - b - Ax)) \cdot  \exp(\frac{1}{2}x^\top A x + b^\top x - \log Z)  \\
& \propto \exp( -\frac{1}{2} (y-b)^{T} A^{-1} (y -b) ) \cdot \exp(x^\top y). 
\end{align*}
We then get the result by noting that $\tilde q_{b,A}(y) = \sum_{x\in \X} q(x,y)$.  
\end{proof}

Therefore, Given sample $\{x_i\} \sim p$ and and Ising model $q(x)$, and can generate $\{y_i\} \sim \tilde p_{b,A}$ via $y_i = b + A  x_i + A^{1/2} \epsilon_i$ and test the equality of $\tilde p_{b,A}$ and $\tilde q_{b,A}$. 
The algorithm is summarized in Algorithm~\ref{alg:goodising}. 

\begin{algorithm}[tb] %
\caption{Goodness-of-fit Test for Ising Model}  \label{alg:goodising}
\begin{algorithmic}
\STATE \emph{Input:} sample $\{x_i\} $ and Ising model $q(x) = \exp(-\frac{1}{2} x^\top A x + b^\top x)$.   
\STATE \emph{Goal:} Test: $H_0$: $\{x_i\}$ is drawn from $q $  ~~~v.s ~~   $H_1$: $\{x_i\}$ is not drawn from $q$.  
\STATE 1. Generate sample $\{y_i\} $ via $y_i = b + A  x_i + A^{1/2} \epsilon_i$, where $\epsilon_i$ is i.i.d. standard Gaussian.  
\STATE 2. Perform kernelized Stein test between $\{y_i \}$ and $\tilde q_{b,A}$ as defined in \eqref{equ:qbA}. 
\end{algorithmic}
\end{algorithm} 
}

\begin{figure*}[ht]
   \centering
   \begin{tabular}{cccc}
   \hspace{-.5em}
      \includegraphics[width=.22\textwidth, trim={0 1cm 0 0},clip]{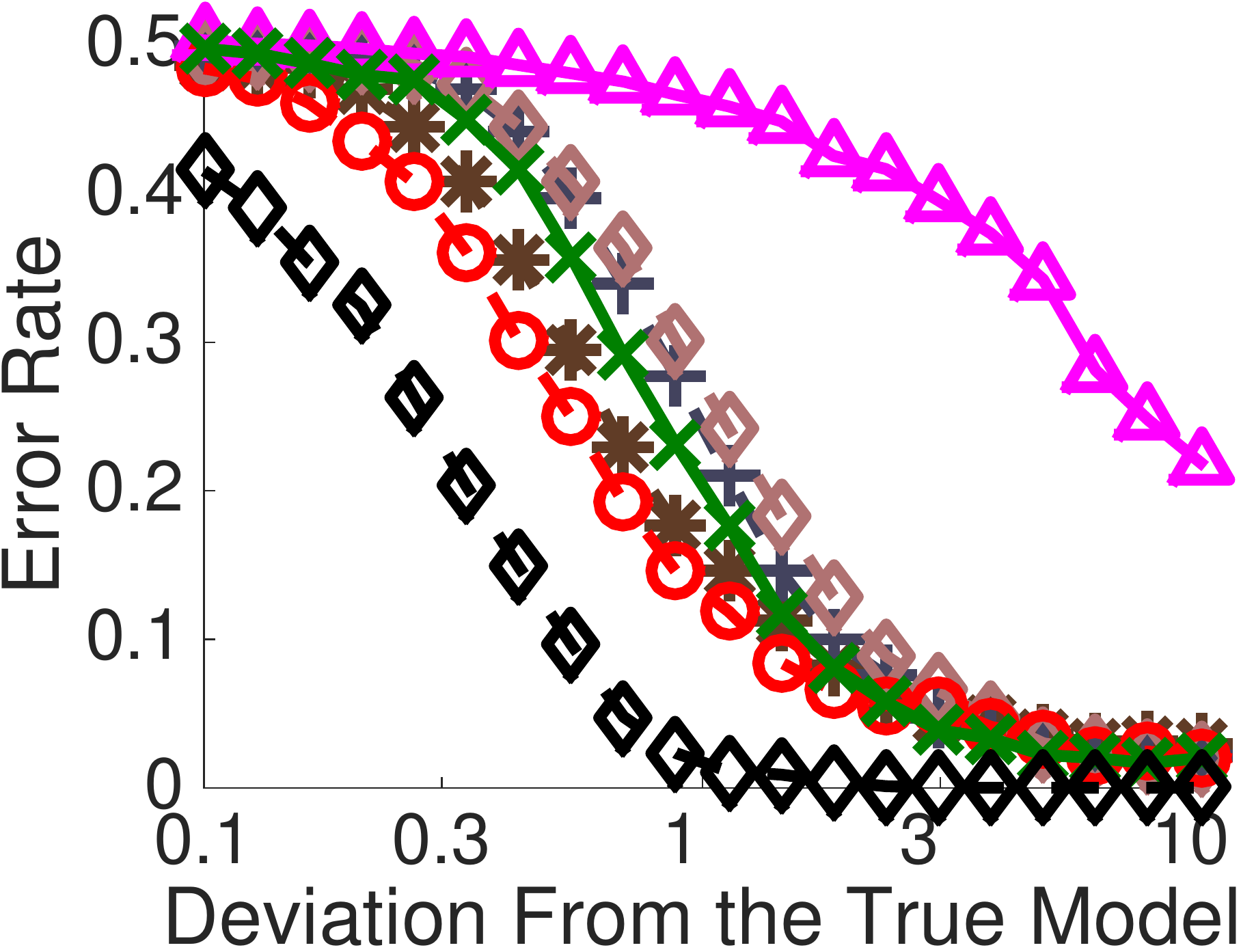} & \hspace{-1.2em}
   \includegraphics[width=.22\textwidth, trim={0 1cm 0 0},clip]{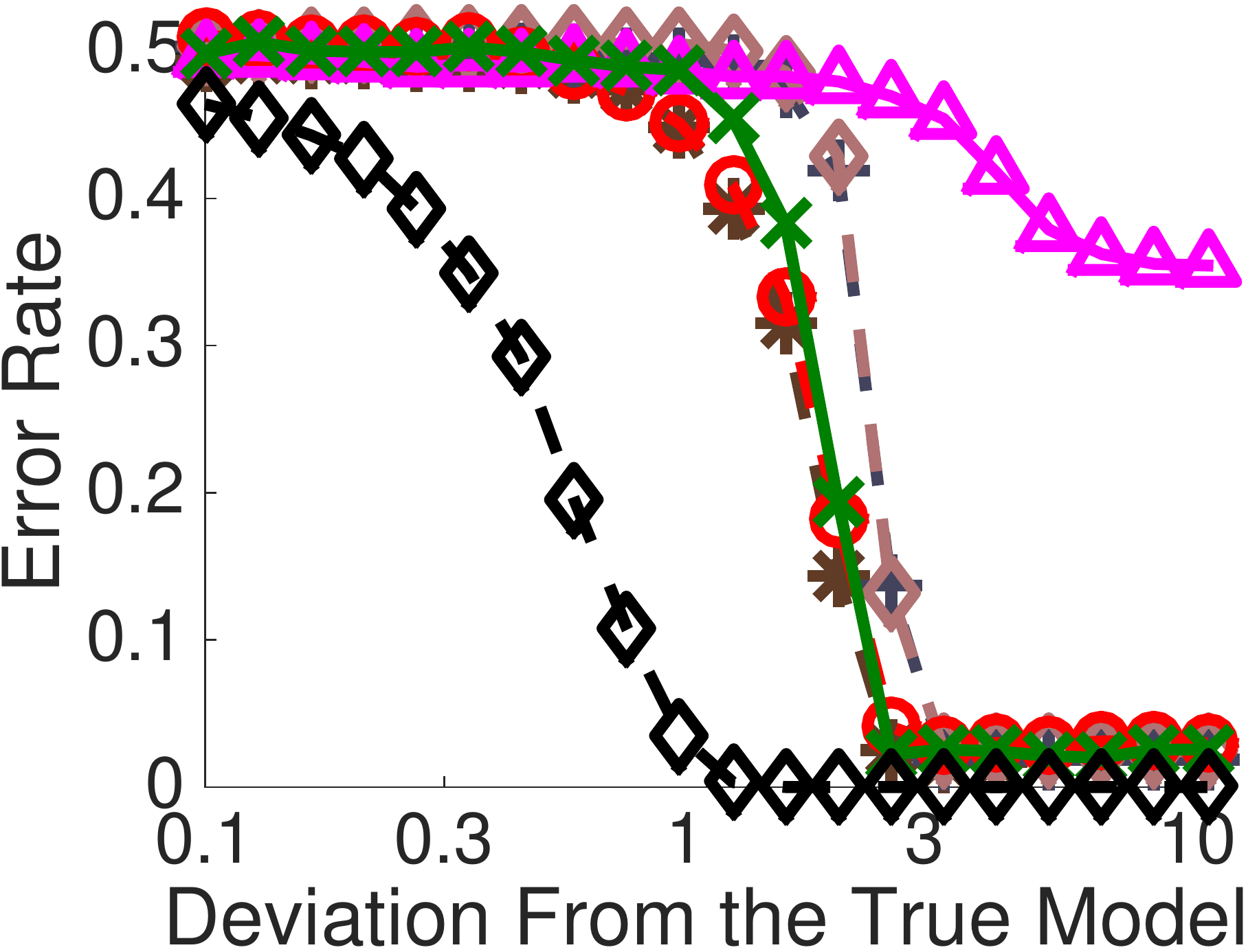}   &\hspace{-1.5em}
   \includegraphics[width=.22\textwidth, trim={0 1cm 0 0},clip]{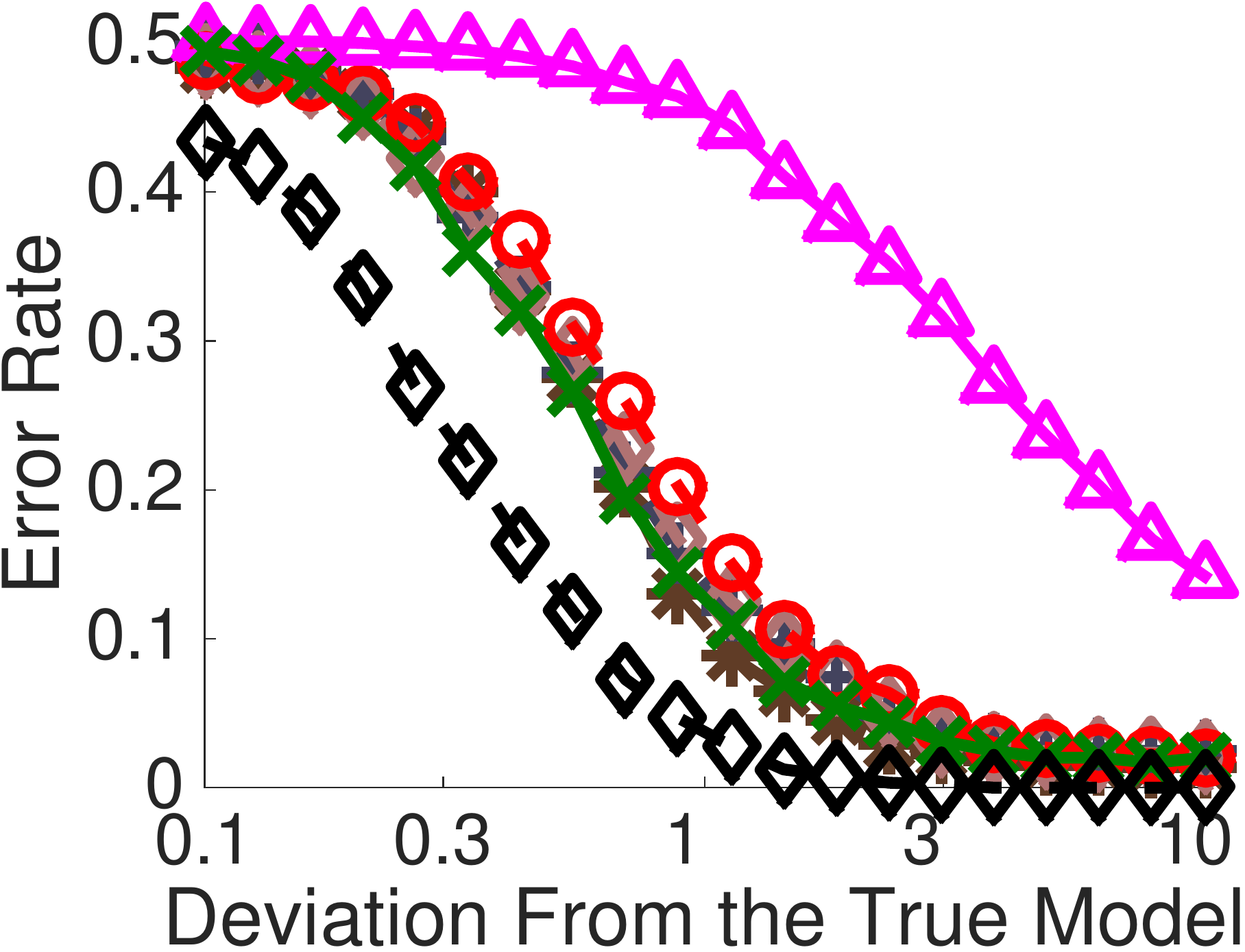}   &
   \hspace{-4em}
\raisebox{1em}{\includegraphics[width=.18\textwidth]{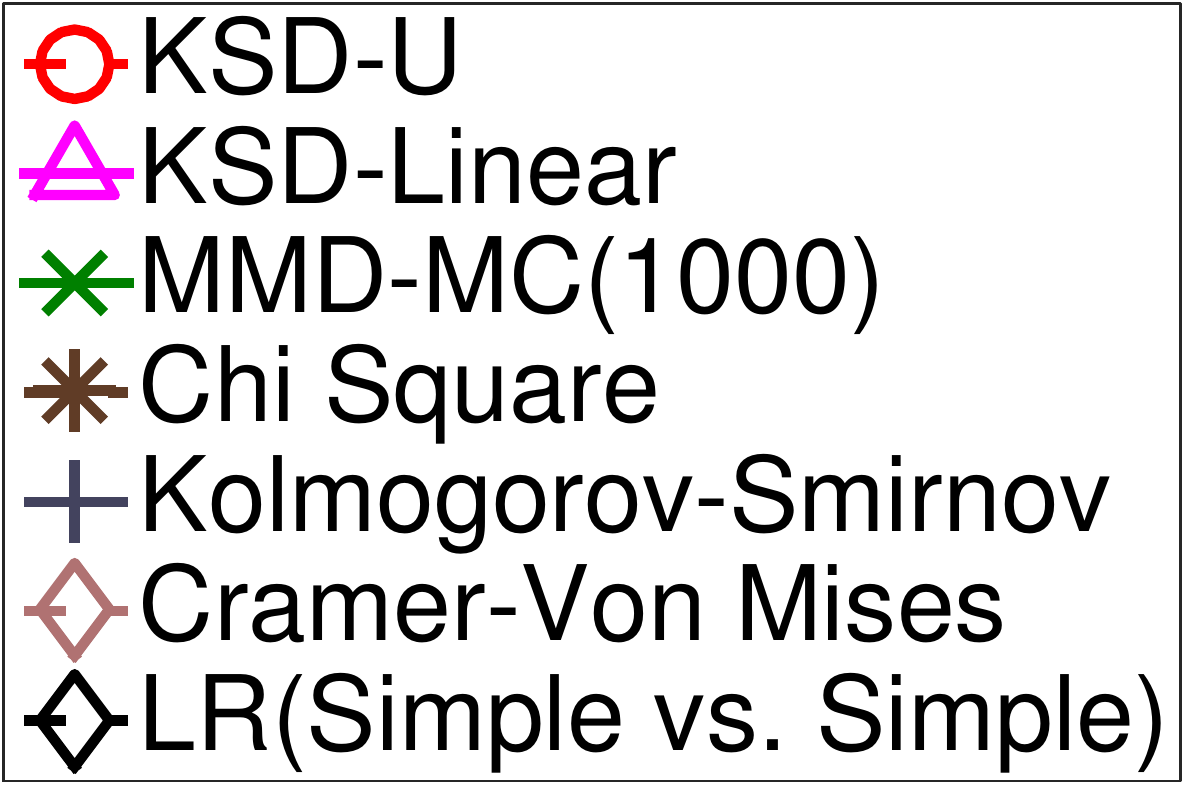}}   
   \\[-.7em]
     {\scalebox{1.1}{\fontfamily{phv}\selectfont \scriptsize Perturbation Magnitude $\sigma_{per}$ }}   &
{\scalebox{1.1}{\fontfamily{phv}\selectfont \scriptsize Perturbation Magnitude $\sigma_{per}$ }}  &
{\scalebox{1.1}{\fontfamily{phv}\selectfont \scriptsize Perturbation Magnitude $\sigma_{per}$ }}       &  \\ [.2em]  
     (a) Perturbation on Mean  & (b)  Perturbation on Variance & (c) Perturbation on Weights &  \\[0.2em]
   \hspace{-.5em}
   \includegraphics[width=.22\textwidth, trim={0 1.75cm 0 0},clip]{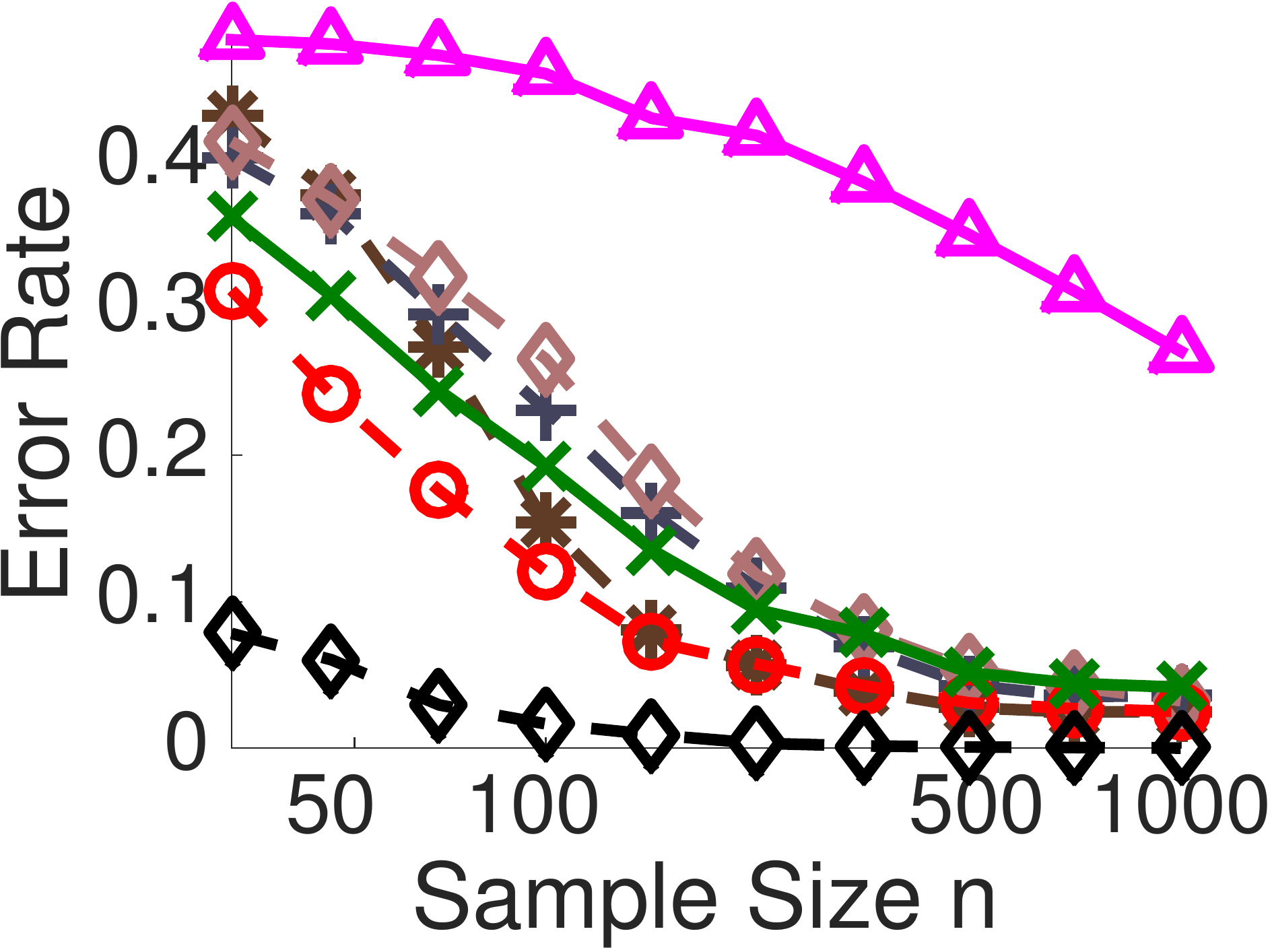}   &    \hspace{-1.2em}   
      \includegraphics[width=.22\textwidth, trim={0 1.75cm 0 0},clip]{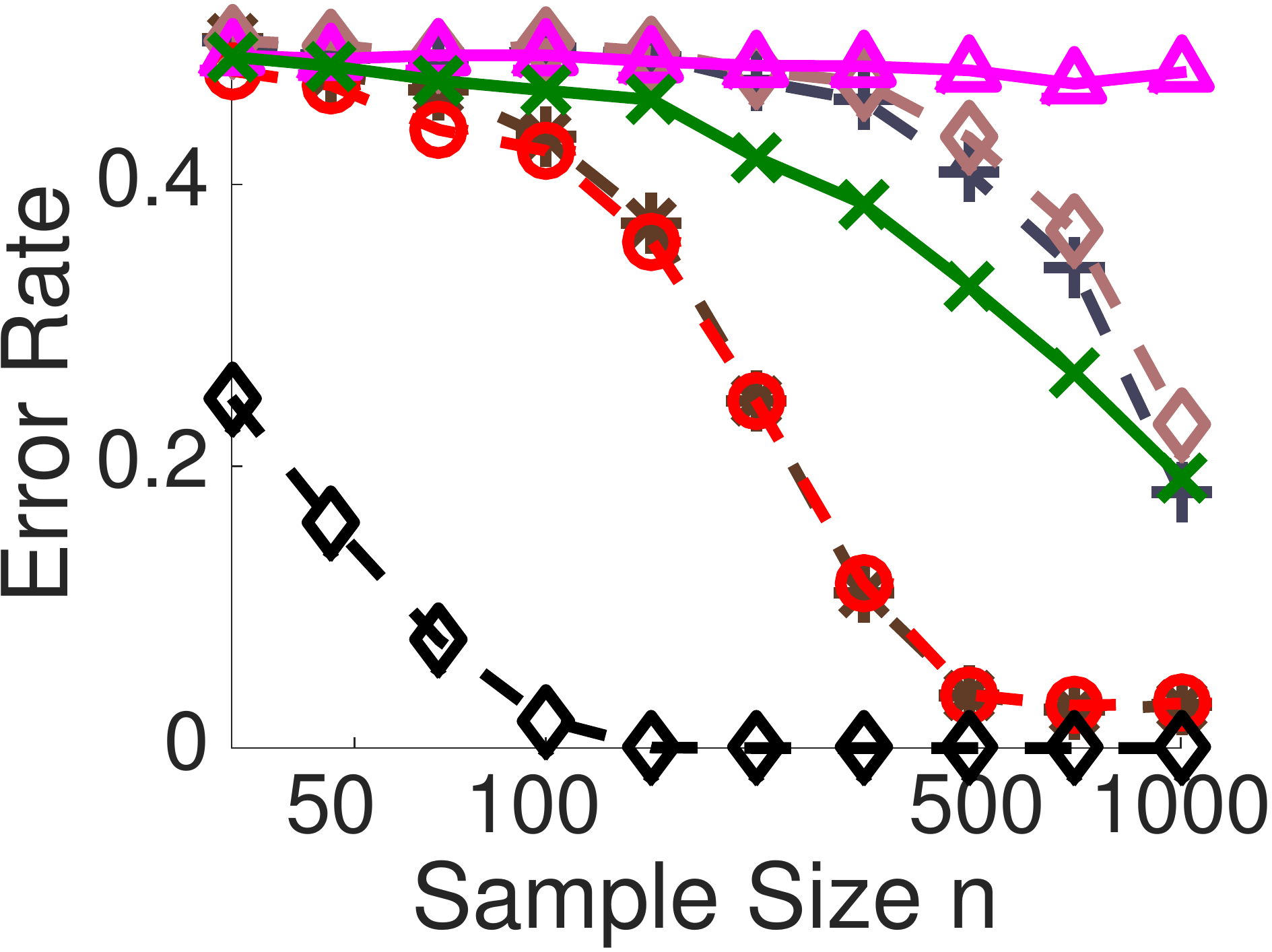}   &   \hspace{-1.5em}
   \includegraphics[width=.22\textwidth, trim={0 1.75cm 0 0},clip]{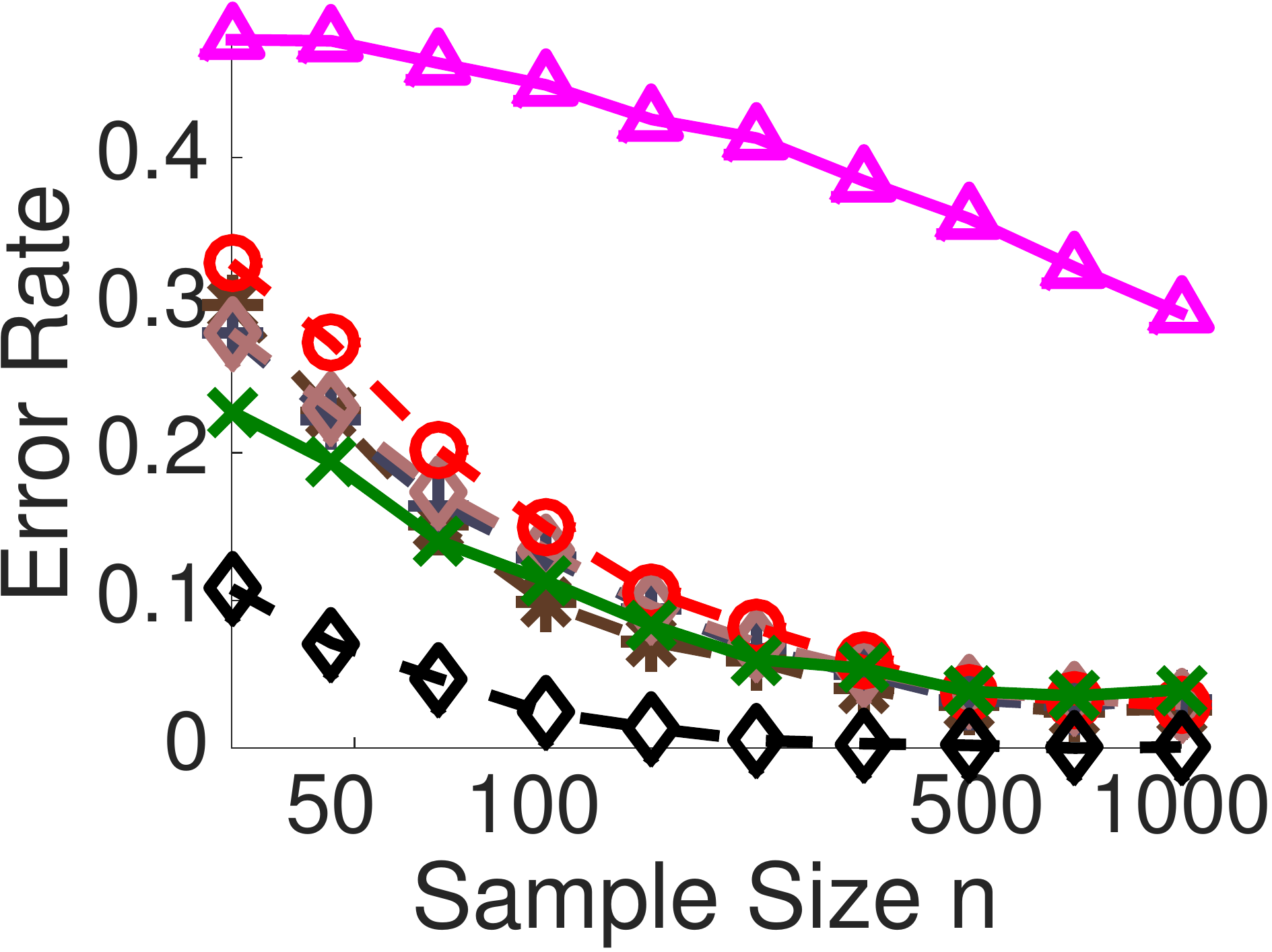} & \hspace{0em}
   \raisebox{0em}{\includegraphics[width=.23\textwidth, trim={0 0cm 0 0},clip]{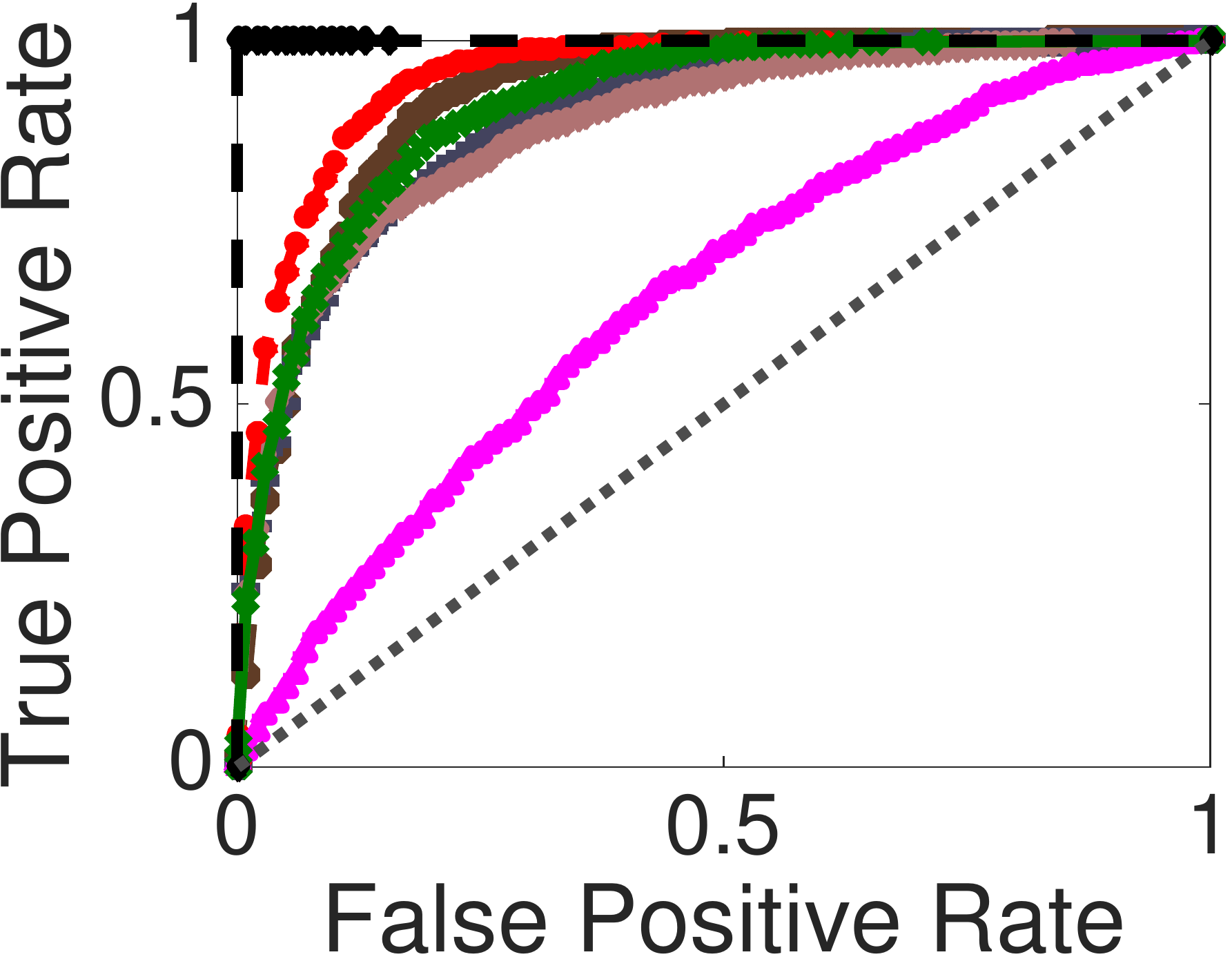} }    
    \\[-.5em]
        { \scalebox{1.1}{\fontfamily{phv}\selectfont \scriptsize Sample Size $n$ }}   &
{  \scalebox{1.1}{\fontfamily{phv}\selectfont \scriptsize  Sample Size $n$ }}  &
{  \scalebox{1.1}{\fontfamily{phv}\selectfont \scriptsize  Sample Size $n$ }}       &  
{~~~~~~~ ~~~ \scalebox{1.1}{\fontfamily{phv}\selectfont \scriptsize   ($n=100$, $\sigma_{per}=1$) }}        \\      
     (d) Perturbation on Mean  & (e)  Perturbation on Variance & (f) Perturbation on Weights & ~~~~ (g) Perturbation on Mean
    \end{tabular}\\
   \caption{Results on 1D Gaussian mixture. (a)-(c) The error rates of different methods vs. the perturbation magnitude $\sigma_{per}$ when perturbing the mean, variance and mixture weights, respectively; we use a fixed sample size of $n =100$. (d)-(f) the error rates vs. the sample size $n$, with fixed perturbation magnitude $\sigma_{per}= 1$.  
We find that the type I errors of all the methods are well controlled under $0.05$, and hence the reported error rates are essentially type II errors.    
   (g) The ROC curve with mean perturbation, $n=100$, $\sigma_{per}=1$. 
    }
   \label{fig:gmm5}
\end{figure*}

\section{Experiments}
\label{sec:exp}
We present empirical results in this section. We start with a toy case of 1D Gaussian mixture on which we can compare with the classical goodness-of-fit tests that only work for univariate distributions, and then proceed to 
Gaussian-Bernoulli restricted Boltzmann machine (RBM), a graphical model widely used in deep learning \citep{welling2004exponential, hinton2006reducing}. 
The following methods are evaluated, all with a significance level of $0.05$: 

1) {\tt KSD-U}. The KSD-based bootstrap test using $U$-statistic in Algorithm~\ref{alg:test} (bootstrap size is $1000$), using
RBF kernel with bandwidth chosen to be median of the data distances.  

2) {\tt KSD-Linear}. The KSD test based on the linear estimator in \eqref{equ:linear} with asymptotically normal null distribution.  

3) Classical goodness-of-fit tests, including $\chi^2$ test, Kolmogorov-Smirnov test and Cramer-von Mises test \citep{lehmann2006testing}; they are evaluated on only the 1D Gaussian mixture. 

4) {\tt MMD-MC(${n'}$)}. Draw exact sample $\{y_i \}$ of size $n'$ from $q(x)$ and perform two sample MMD test of \citet{gretton2012kernel} over $\{x_i\}$ and $\{y_i\}$ using bootstrap\footnote{We use the mmdTestBoot.m under \url{http://www.gatsby.ucl.ac.uk/\%7Egretton/mmd/mmd.htm}}, with 1000 bootstrap replicates.   

5) {\tt MMD-MCMC(${n'}$)}. Draw approximate sample $\{y_i \}$ of size $n'$ from $q(x)$ using Gibbs sampler and perform MMD test on $\{x_i\}$ and $\{y_i\}$; we use $1000$ burn-in steps. 

6) {\tt LR (simple vs. simple)}. We evaluate the  exact log-likelihood ratio $2\log (q(x) /p(x))$ and use it to test whether $\{x_i\}$ is drawn from $p(x)$ or $q(x)$. 
This approach is an oracle test in that it knows it exactly calculates the likelihood, and assumes we know $p(x)$ and tests a much easier null hypothesis of simple vs. simple. 

7) {\tt Likelihood Ratio (AIS)}. 
We  approximately evaluate the likelihood ratio using annealed importance sampling (AIS), which is one of the most widely used algorithm for approximating likelihood \citep{neal2001annealed, salakhutdinov2008quantitative}. 
Our AIS implementation uses a Gibbs sampler transition with a linear temperature grid of size $1000$.
We do not perform a test based on the AIS result because it is hard to know the approximation error. 


\paragraph{1D Gaussian Mixture} 
We draw i.i.d. sample $\{x_i\}_{i=1}^n$ from $p(x) =\sum_{k=1}^5 w_k \normal(x~; \mu_k, \sigma^2)$ with $w_k = 1/5$, $\sigma = 1$ and $\mu_k$ randomly drawn from $\mathrm{Uniform}[0,10]$.
We then generate  $q(x)$ by adding Gaussian noise on $\mu_k$, $\log w_k$, or $\log \sigma^2$, leading to three different ways for perturbation; 
the perturbation magnitude is controlled by the variance $\sigma_{per}^2$ of Gaussian noise. 
In our experiment, we set $q(x)$ randomly with equal probability to be either the true model $p(x)$ ($H_0: p=q$), or the perturbed version ($H_1: p \neq q$), and use different methods to test $H_0$ vs. $H_1$. 
We repeat $1000$ trials, and report the average error rate in Figure~\ref{fig:gmm5}. 

We find from Figure~\ref{fig:gmm5} that the oracle {\tt LR (simple vs. simple)} performs the best as expected. Otherwise, 
our {\tt KSD-U} performs comparably with, or better than, the classical tests ($\chi^2$, Kolmogorov-Smirnov and Cramer-Von Mises) as well as {\tt MMD-MC(1000)}. 
{\tt KSD-Linear} tends to perform the worst, suggesting it is not useful in this simple setting.  
However, it can serve as a computationally efficient alternative of {\tt KSD-U} for more complex models on which the other tests are not practical. 
Note that because both the cases of $p=q$ and $p\neq q$ happen with $0.5$ probability in our simulation, the error rate in the hardest case when $p$ is close $q$ is $0.5$. 
 

\begin{figure*}[htb]
   \centering
   \scalebox{.95}{
   \begin{tabular}{cccc}
      \includegraphics[width=.275\textwidth, trim={0 1cm 0 0},clip]{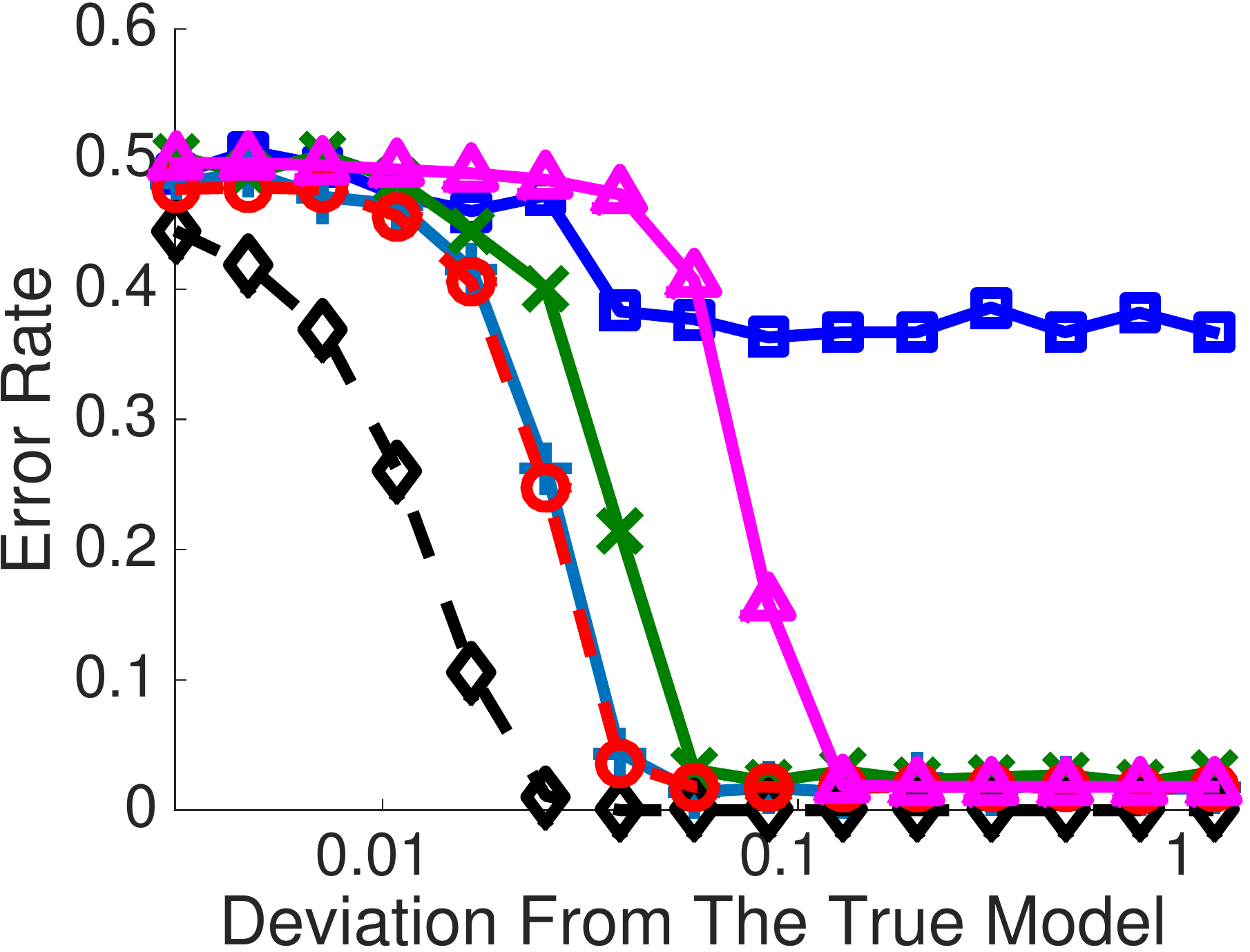} &
   \includegraphics[width=.2\textwidth, trim={0 1cm 0 0},clip]{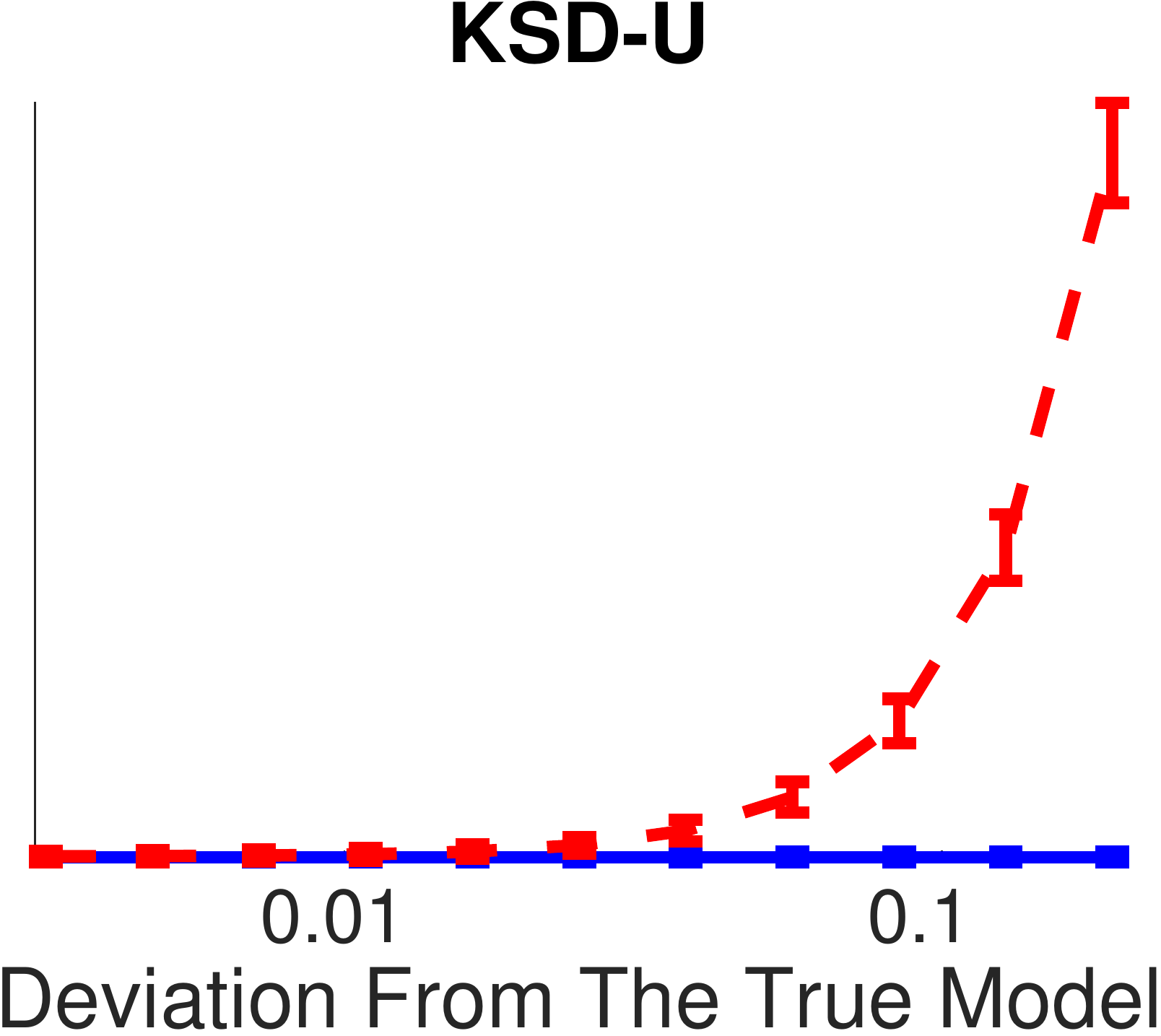}   &
   \includegraphics[width=.2\textwidth, trim={0 1cm 0 0},clip]{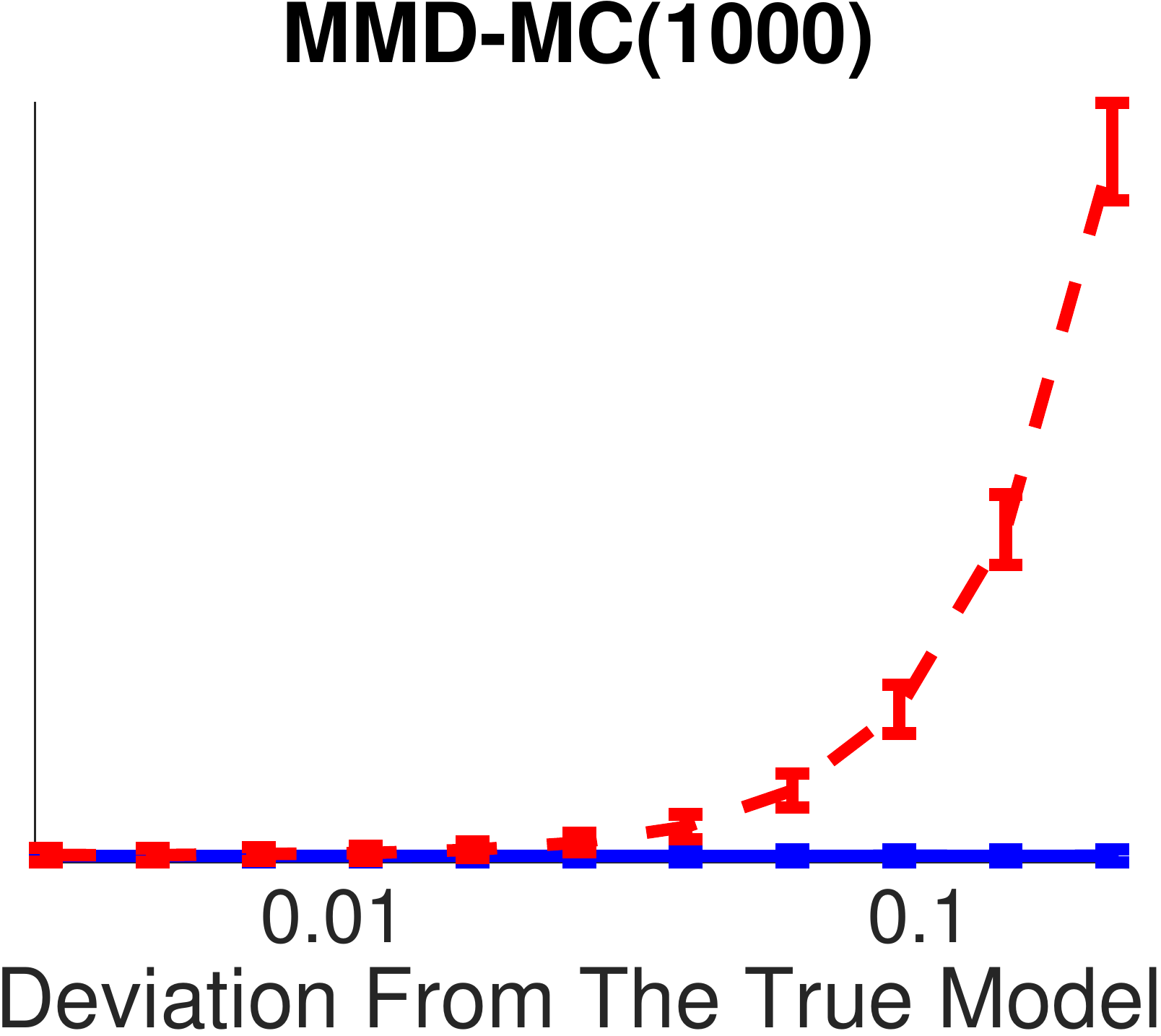}   &   
      \includegraphics[width=.2\textwidth, trim={0 1cm 0 0},clip]{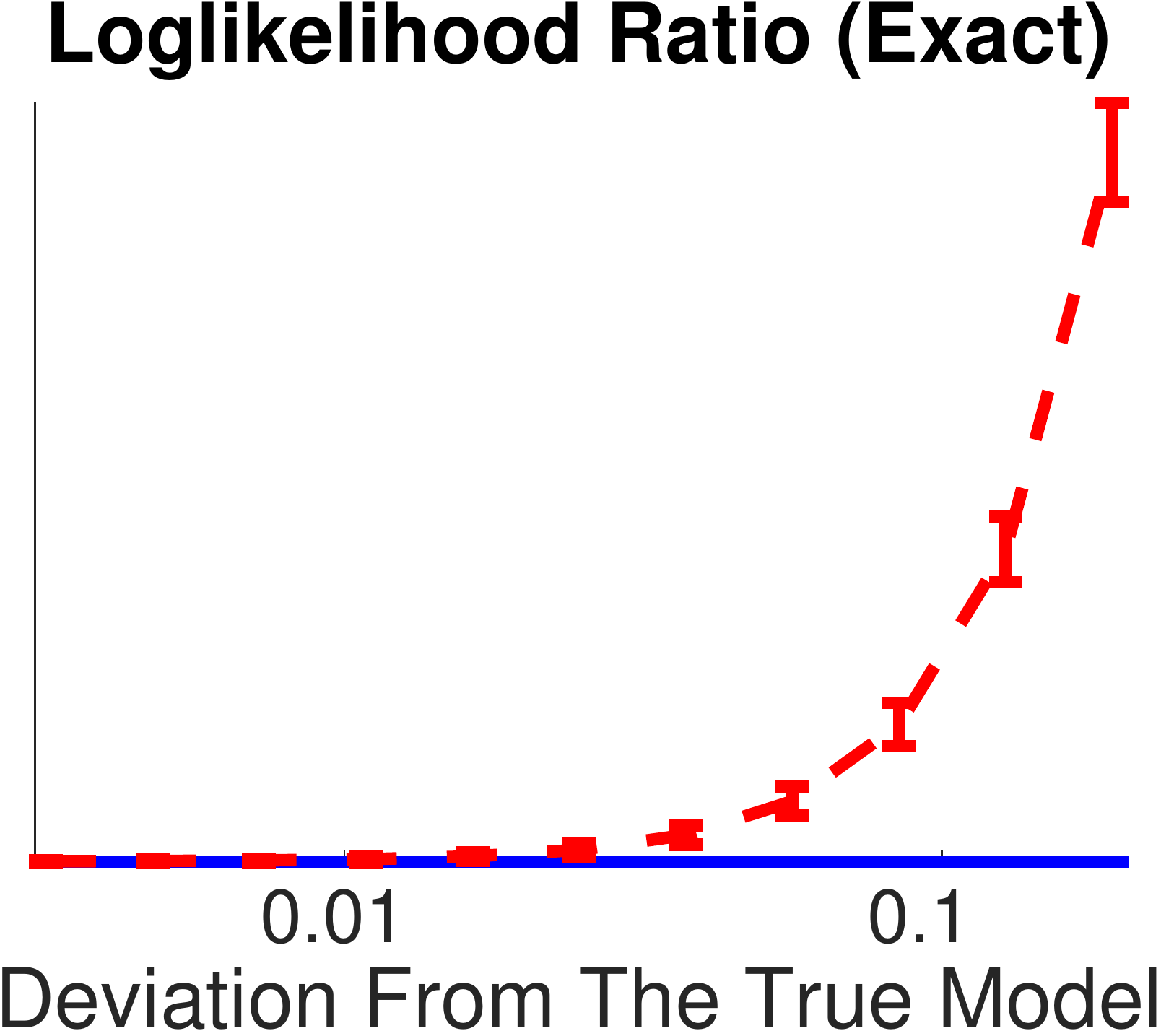}\\ [-.7em]      
    {~~~~~~~ {\fontfamily{phv}\selectfont \scriptsize Perturbation Magnitude $\sigma_{per}$ }}   &
{{\fontfamily{phv}\selectfont \scriptsize Perturbation Magnitude $\sigma_{per}$ }}  &
{ {\fontfamily{phv}\selectfont \scriptsize Perturbation Magnitude $\sigma_{per}$ }}       & 
{ {\fontfamily{phv}\selectfont \scriptsize Perturbation Magnitude $\sigma_{per}$ }} 
 \\         
    \small  (a) & \small (c) & \small (e) & \small (g) \\
   \includegraphics[width=.30\textwidth, trim={0 1cm 0 0},clip]{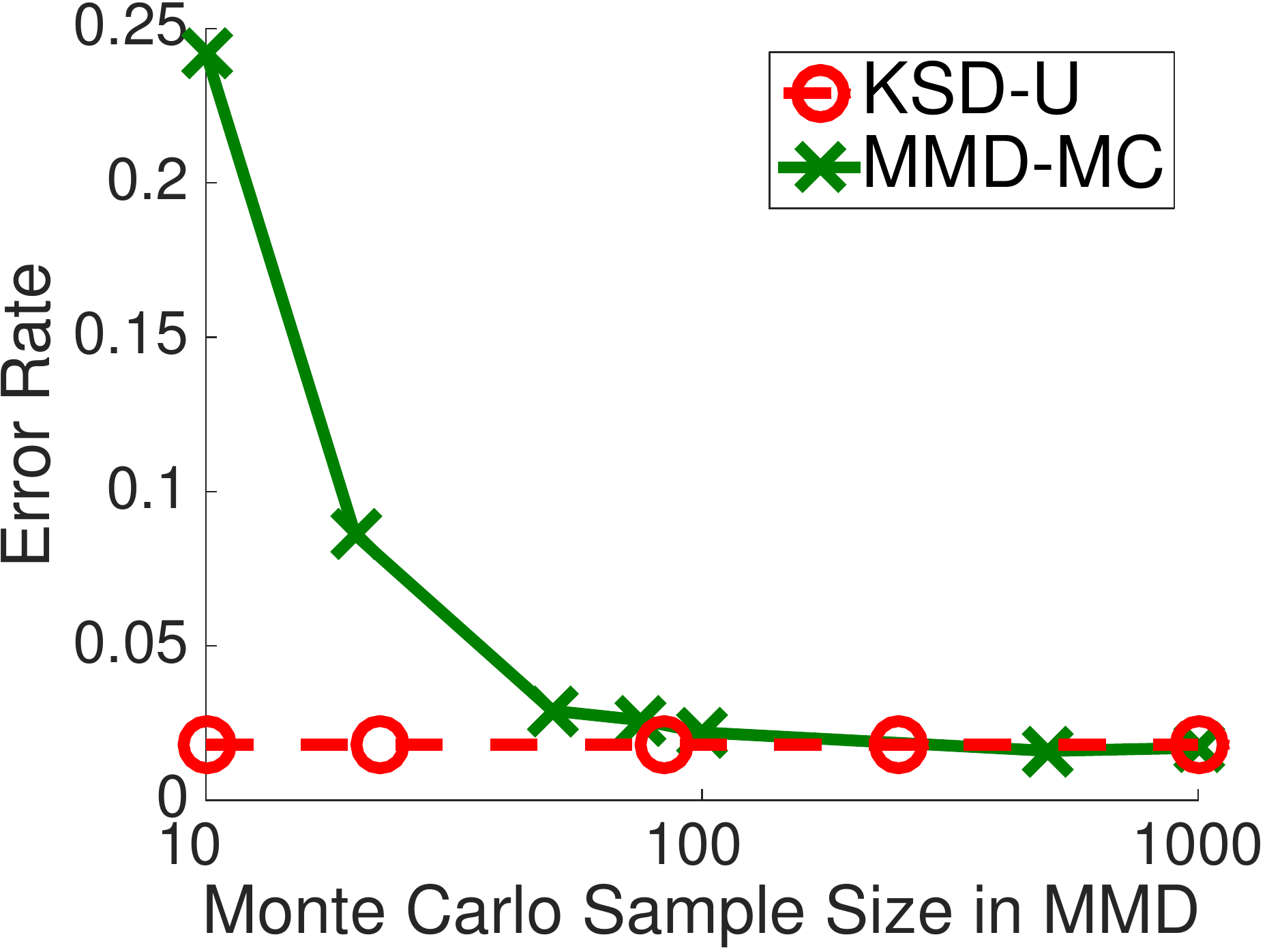}   &       
      \includegraphics[width=.2\textwidth, trim={0 1cm 0 0},clip]{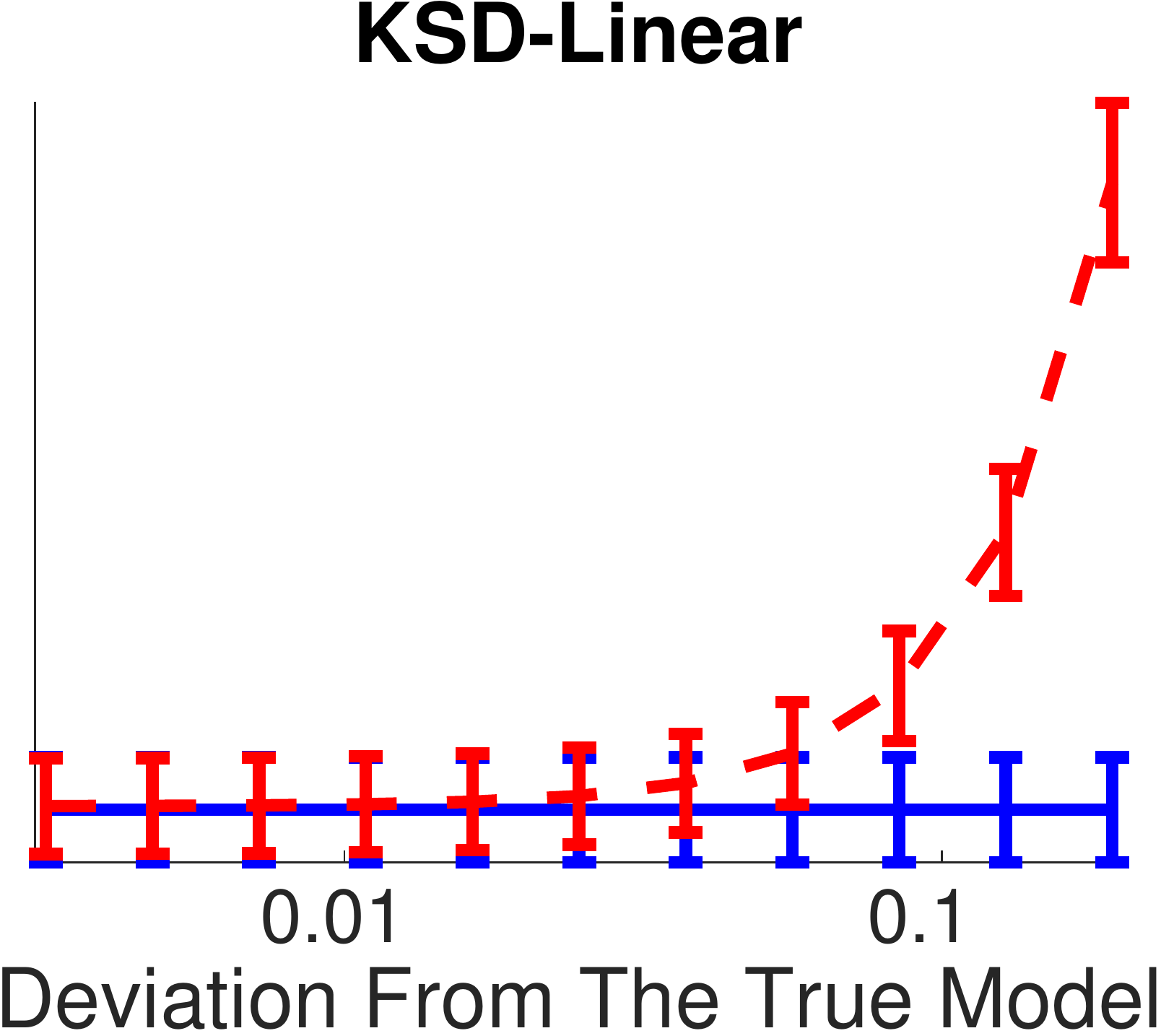}   &   
   \includegraphics[width=.2\textwidth, trim={0 1cm 0 0},clip]{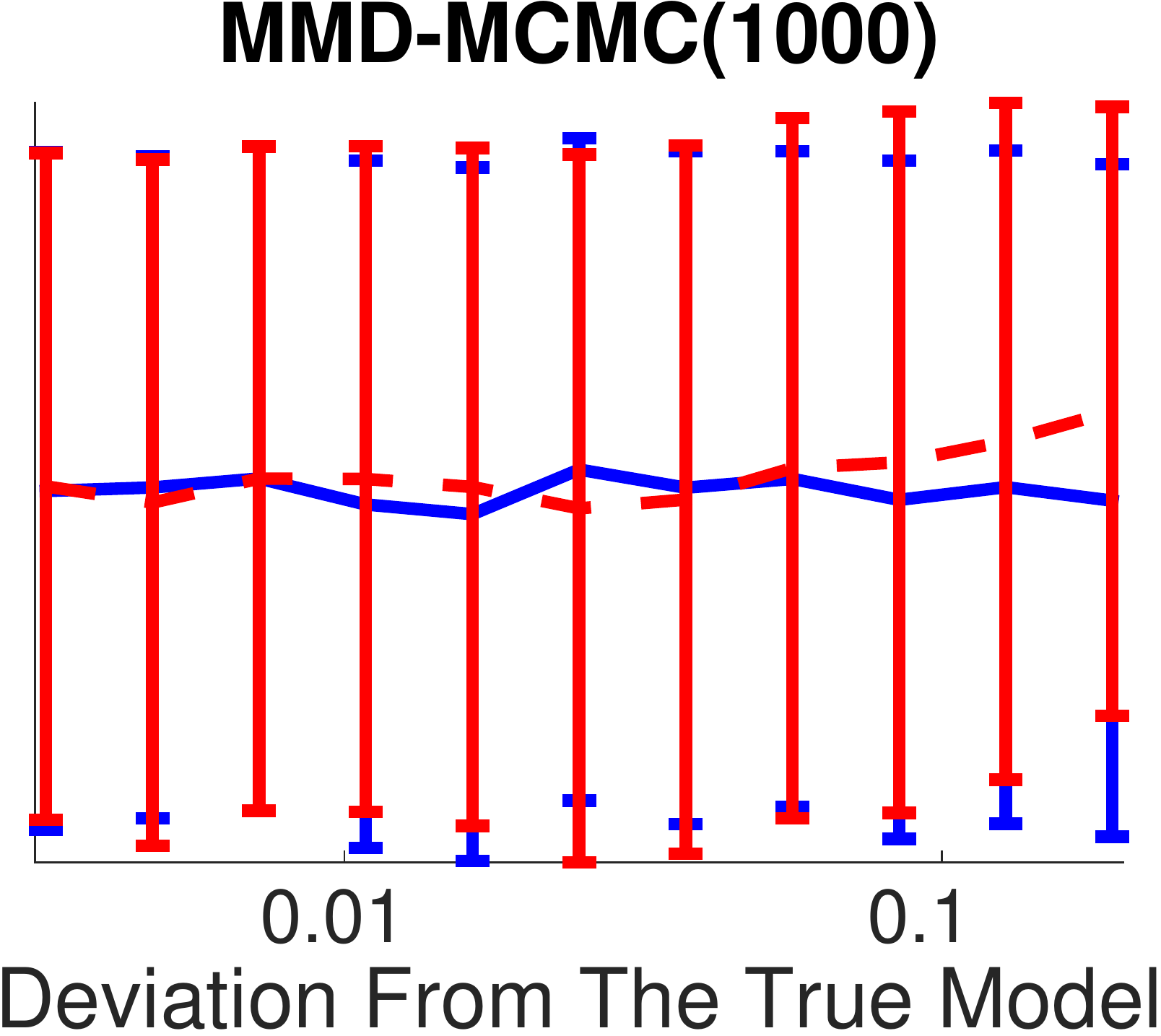}   &
   \includegraphics[width=.2\textwidth, trim={0 1cm 0 0},clip]{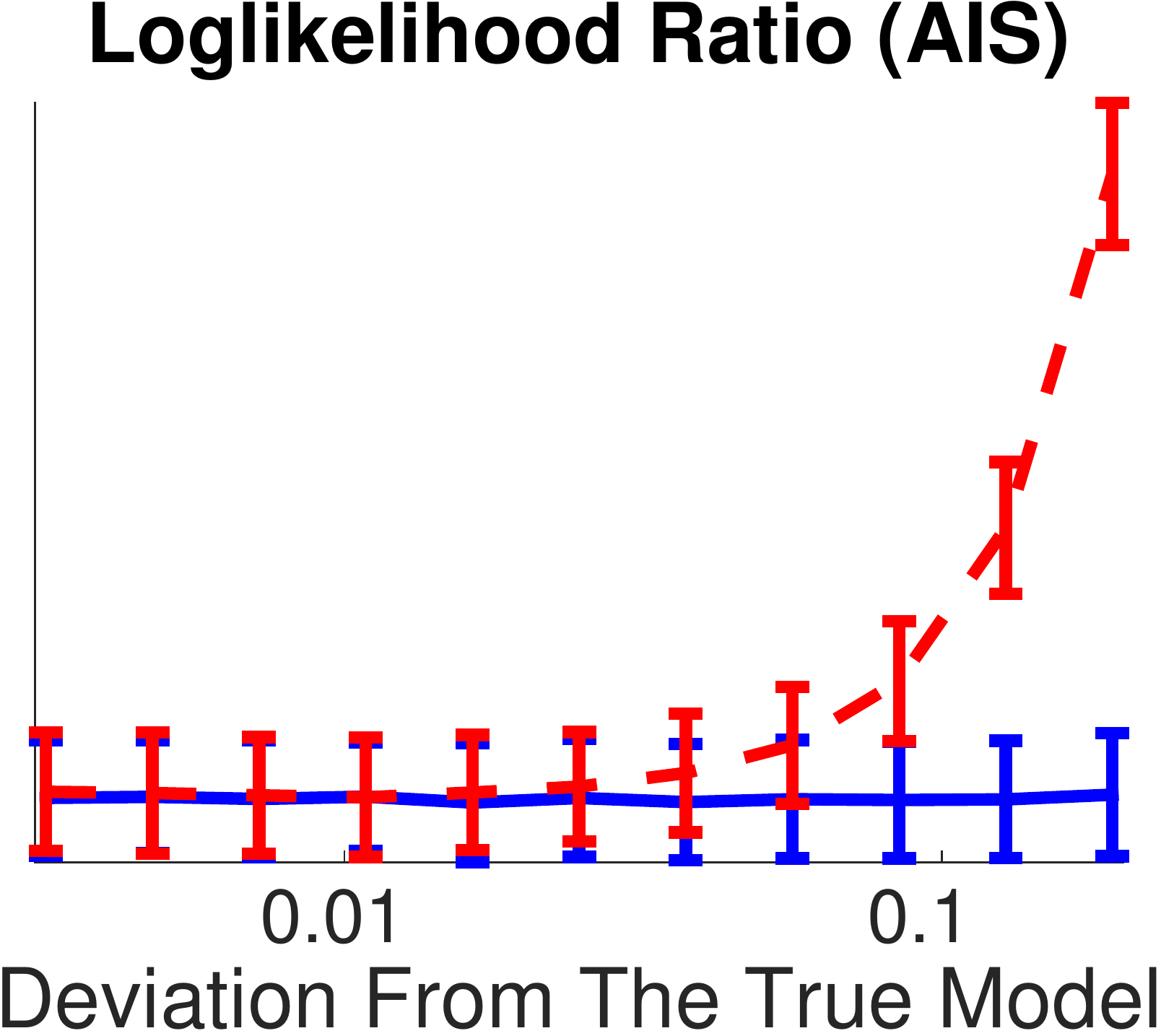}    \\
    [-.7em]      
    {~~~~~~~ {\fontfamily{phv}\selectfont \scriptsize Monte Carlo Sample Size $n'$ in MMD }}   &
{{\fontfamily{phv}\selectfont \scriptsize Perturbation Magnitude $\sigma_{per}$ }}  &
{ {\fontfamily{phv}\selectfont \scriptsize Perturbation Magnitude $\sigma_{per}$ }}       & 
{ {\fontfamily{phv}\selectfont \scriptsize Perturbation Magnitude $\sigma_{per}$ }} 
\\
\small  (b) & \small (d)& \small (f) & \small (h)
    \end{tabular}
    }\\
\scalebox{.95}{
\setlength{\unitlength}{\textwidth}    
   \begin{picture}(0,0)(0, 0.01)
   \put(-.387,.15){\includegraphics[width=.18\textwidth]{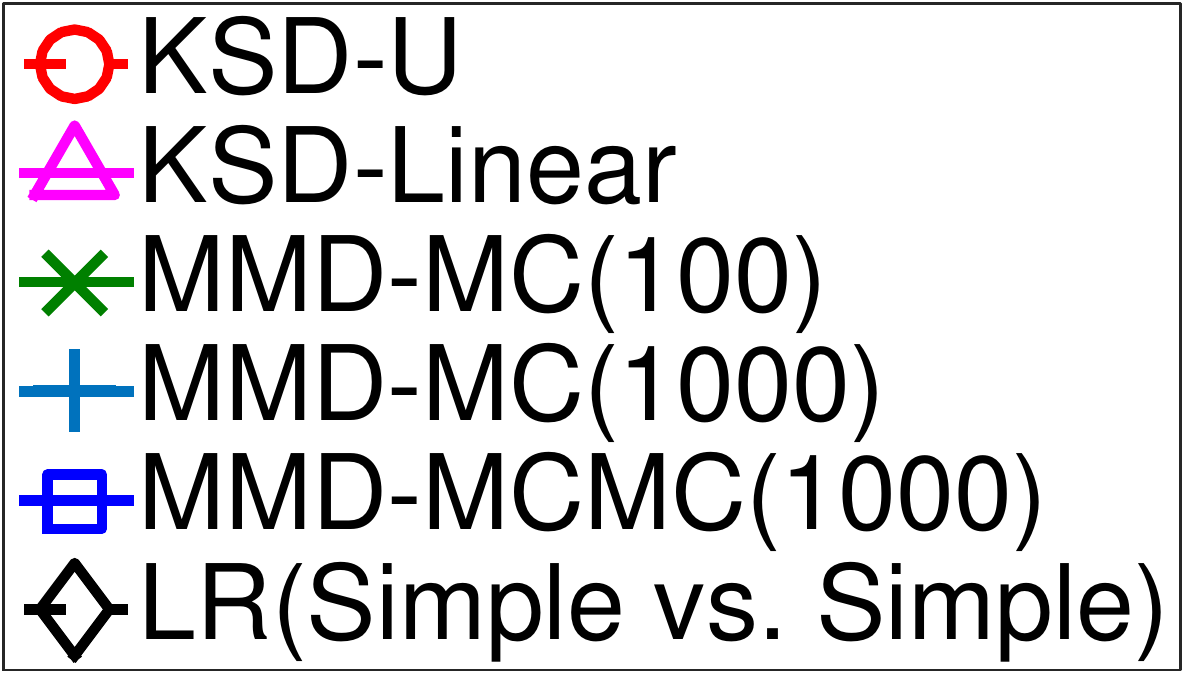} }      
    \put(-0.15,.4){\includegraphics[width=.07\textwidth]{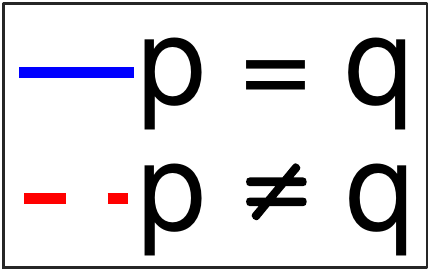}}       
    \put(-0.15,.14){\includegraphics[width=.07\textwidth]{figures/Legend_12.pdf}}
    \put(-.18,.12){\rotatebox{90}{\fontfamily{phv}\selectfont \scriptsize Discrepancy }}   
    \put(-.18,.36){\rotatebox{90}{\fontfamily{phv}\selectfont \scriptsize Discrepancy }}       
    \end{picture}
    }
   \caption{Results on Gaussian-Bernoulli RBM. (a) The error rate vs.  the perturbation magnitude $\sigma_{per}$. 
   (b) The error rate of {\tt MMD-MC} vs. the size of the exact sample used. 
    (c) Different discrepancy measures between $p$ and $q$ under the null $p = q$ (blue solid lines) and the alternatives $p\neq q$ (dashed red lines); 
    the x-axes are the deviation $\sigma_{per}$ between $p$ and $q$ when $p \neq q$. We set $n = 100$ in all the cases.}
   \label{fig:gbrmhard}
\end{figure*}

\paragraph{Gaussian-Bernoulli Restricted Boltzmann Machine (RBM)}
\fullversion{
Gaussian-Bernoulli RBM is a type of hidden variable graphical models consist of a continuous observable variable $x \in \R^d$ and a binary hidden variable $h \in \{\pm 1\}^{d'}$, with joint probability
\begin{align*}
p(x,h) \propto \exp( \frac{1}{2}x^\top B h + b^\top x + c^\top h - \frac{1}{2}|| x||^2_2). 
\end{align*}
The probability of the observable variable $x$ is $p(x) = \sum_{h\in \{\pm 1\}^{d'}}p(x, h).$
Equivalently, $p(x)$ can be viewed as a Gaussian mixture model with an exponential number of components, each corresponding to a configuration of an Ising model defined on $h$, that is,  
\begin{align*}
p(x) & \propto \sum_{h}  p(h)  \normal(x, ~ Bh + b, ~I), \\ 
p(h) & = \frac{1}{Z_h} \exp(\frac{1}{2}h^\top B^\top B h + (B^\top b + c)h), 
\end{align*}
where $p(h)$ is obtained by explicitly integrating out $x$ in $p(x,h)$.  
Because calculating the constant $Z_h$ of the Ising model is \#P-complete, and it is generally tractable to evaluate the likelihood $p(x)$. 
Nevertheless, one can show that its score function $\score_p$ can be easily calculated in a closed form, 
\begin{align*}
\score_p(x)= \nabla_x  \log p(x) =  b -x  + B\tanh(B^\top x + c),  
\end{align*}
where $\tanh(y) =  \frac{\exp(y)-\exp(-y)}{\exp(y)+\exp(-y)}.$
}
Gaussian-Bernoulli RBM is a hidden variable graphical models consist of a continuous observable variable $x \in \R^d$ and a binary hidden variable $h \in \{\pm 1\}^{d'}$, with joint probability
\begin{align*}
p(x,h) = \frac{1}{Z} \exp( \frac{1}{2}x^\top B h + b^\top x + c^\top h - \frac{1}{2}|| x||^2_2), 
\end{align*}
where $Z$ is the normalization constant. 
The probability of the observable variable $x$ is $p(x) = \sum_{h\in \{\pm 1\}^{d'}}p(x, h),$ 
which is intractable to calculate due to the difficult constant term $Z$. 
Nevertheless, one can show that its score function $\score_p$ can be easily calculated in a closed form, 
\begin{align*}
\score_p(x)=   b -x  + B~\phi(B^\top x + c),  ~~~~~~~ \phi(y) =  \frac{e^{2y}-1}{e^{2y}+1}. 
\end{align*}

In our experiment, we simulate a true model $p(x)$ by drawing $b$ and $c$ from standard Gaussian and select $B$ uniformly randomly from $\{\pm 1\}$; we use $d=50$ observable variables and $d'=10$ hidden variables, so that it remains possible to exactly calculate $p(x)$ and draw exact samples using the brute-force algorithm. 
Similar to the case of 1D Gaussian mixture,  
we set $q(x)$ randomly with equal probability to be equal to either $p(x)$ or a perturbed version by adding Gaussian noise to $B$ with variance $\sigma_{per}^2$. 
We report the the error rates of different tests in Figure~\ref{fig:gbrmhard}; the results are averaged on $1000$ random trials. 

Figure~\ref{fig:gbrmhard}(a) shows that the oracle {\tt LR (simple vs. simple)} performs the best again as expected, followed by our {\tt KSD-U} method. 
The {\tt MMD-MCMC} breaks down because the MCMC sample is not representative of $q$, while the performance of {\tt MMD-MC} depends on the size of the exact sample: 
it performs worse than {\tt KSD-U} with {\tt MMD-MC(100)}, and is almost as good with {\tt MMD-MC(1000)}; see also Figure~\ref{fig:gbrmhard}(b). 
Again, we find that {\tt KSD-linear} generally performs much worse than {\tt KSD-U}, but it provides a computationally efficient $O(n)$ alternative to {\tt KSD-U} which has a $O(m n^2)$ complexity and {MMD} which costs $O(mnn')$. 
A trade-off between linear and quadratic complexity can be achieved using block averaging; see \citet{zaremba2013b}.   

Figure~\ref{fig:gbrmhard}(c)-(h) shows the different discrepancy measures under the case $p=q$ and $p\neq q$, respectively. 
Again, we can find that the exact likelihood ratio provides the best discrimination, while  {\tt MMD-MCMC} fails to distinguish the two cases at all. 
The AIS approximation performs reasonably well, but is worse than {\tt KSD-U} and {\tt MMD-MC(1000)} in this particular case. 
%


\section{Conclusion and Future Directions}
\label{sec:con}
We propose a new computationally tractable discrepancy measure between complex probability models, and use it to derive a novel class of goodness-of-fit tests. 
We believe our discrepancy measure provides a new fundamental tool for analyzing and using complex probability models in statistics and machine learning. 
Future directions include extending our method to composite goodness-of-fit tests, in which we want to test if the observed data follows a given class of distributions, 
as well as understanding the theoretical discrimination power of KSD compared to the other classical goodness-of-fit tests, 
two sample tests (e.g., MMD with infinite exact Monte Carlo sample), and the method in \citet{gorham2015measuring}.  

\paragraph{Acknowledgment}
This work is supported in part by NSF CRII 1565796. 
We thank Arthur Gretton and the anonymous reviewers for their valuable comments. 

\newpage
{\small
\bibliographystyle{icml2015mine}
\bibliography{bibrkhs_stein}
}

\myempty{
\clearpage\newpage 
\red{::::::::::: trash}
\paragraph{Probability Metrics}
Probability metrics or divergences quantify distances between different probability distributions, and are of fundamental importance in machine learning and statistics. 
The most common examples are Kullback-Leiber (KL) divergence, $\chi^2$ divergence, $f$-divergences and total variance distance. 
Unfortunately, computational tractability are often a critical issue when evaluating these metrics in real world applications, which is especially true in modern machine learning applications with complex probabilistic models , such as probabilistic graphical models and deep generative models, that involves high dimensional variables for which even evaluating the likelihood is challenging. 
In this work, we propose a new probability metrics motivated on stein' method, by considering Stein discrepancy in reproducing kernel Hilbert spaces (RKHS). 
Our new metric emphasizes computational tractability even for highly complex probabilistic graphical models. 


\paragraph{Goodness-of-fit and Model Evaluation}
Our new metric finds a natural application for goodness fit tests or model evaluations, which 
quantifies how well a given model $q(x)$ fits a set of observations $\{x_i\}$. 
Modern machine learning and statistics increasingly involves complicated probabilistic models that are out of our current computational and analytical reach. 
In the recent growing field of deep learning, highly complicated probabilistic models are constructed from data using all sorts of heuristics. 
These models are learnt without statistical guarantees, and more problematically, it is even challenging to validate these models in unsupervised fashion, since the likelihood, as the typical goodness-of-fit measurement, is often intractable to calculate in complicated models. Often people use Markov chain Monte Carlo (MCMC) or deterministic approximations for approximating the likelihood, however, since the log-likelihood is fundamentally difficult to calculate (\# P-complete for graphical models), the quality of these approximations tends to be very bad, and often inherent certain biases due to the choice of the approximation method; e.g. typical MCMC methods tend to underestimate the likelihood due to the under-exploration of the probability field. 
A critical question is to find alternative measure of goodness-of-fit that are computationally tractable. 

\emph{Our Applications:}

1. Unsupervised model evaluation and comparison for complicated probabilistic models, such as Markov random fields or deep generative models. 

2. Checking the convergence of MCMC. 

3. When learning under data streaming, checking the fitness of the current model against new data. 

\paragraph{Stein's Method}
Stein's method, first introduced by Charles Stein in 1972 \citep{stein1972} to obtain bounds between the distribution of sum of dependent random variables with a standard normal distribution. 
It has since then developed into a collection of powerful tools is a general method in probability theory to obtain bounds on the distance between two probability distributions, 
with widely applications, mostly in theoretical analysis, for proving central limit theorems, providing probability metric bounds, and recently for proving concentration inequalities \citep{barbour2005introduction, stein2005stein, chatterjee2007stein}.  
In this work, we demonstrate that Stein's idea can also be a profound tool to address problems related to high complex probabilistic models in modern machine learning applications. 
}


\onecolumn

\icmltitle{Appendix for ``A Kernelized Stein Discrepancy for Goodness-of-fit  Tests"}


\vskip 0.3in




%




\appendix
\numberwithin{equation}{section}


\section{Proofs}

\begin{proof}[Proof of Theorem~\ref{thm:kxx}]
1) 
Denote by $\vv v(x,x') = k(x,x') \score_q(x') + \nabla_{x'} k(x,x') =  \stein_q k_x(x') $; applying Lemma~\ref{lem:basic2} on $k(x, \cdot)$ with fixed $x$, 
\begin{align*}
\S(p,q) 
& = \E_{x,x'\sim p} [(\score_q(x) - \score_p(x))^\top k(x,x') (\score_q(x') - \score_p(x')  )] \\
& = \E_{x,x'\sim p} [(\score_q(x) - \score_p(x))^\top \vv v(x,x')] 
\end{align*}
Because $k(\cdot,x')$ is in the Stein class of $p$ for any $x'$, we can show that $\nabla_{x'}k(\cdot, x')$ is also in the Stein class, since 
$$
\int_x \nabla_x( p(x) \nabla_{x'} k(x, x'))  dx
= \nabla_{x'} \int_x \nabla_x( p(x) k(x, x'))  dx  = 0, 
$$
and hence $\vv v(\cdot ,x')$ is also in the Stein class; 
apply Lemma~\ref{lem:basic2} on $\vv v(\cdot, x')$ with fixed $x'$ gives
\begin{align*}
\S(p,q) 
& = \E_{x,x'\sim p} [(\score_q(x) - \score_p(x))^\top \vv v(x,x'))] \\
& = \E_{x,x'\sim p} [\score_q(x)^\top \vv v(x,x') + \trace( \nabla_x \vv v (x,x'))] 
\end{align*}
The result then follows by noting that $\nabla_x  \vv v(x,x')  =  \nabla_x k(x,x') \score_q(x')^\top + \nabla_{x'x'} k(x,x').$
\end{proof}

\begin{proof}[Proof of Theorem~\ref{thm:mercer22}]
Note that 
\begin{align*}
\nabla_x k(x,x') = \sum_j \lambda_j   \nabla_x e_j(x) ~  e_j(x'), &&
\nabla_{x,x'} k(x,x') = \sum_j \lambda_j  \nabla_x e_j(x) ~ \nabla_{x'} e_j(x')^\top, 
\end{align*}
and hence 
\begin{align*}
&\!\!\!\!\!\!\!\! u_q(x,x')  \\
&~ =  \score_q(x)^\top k(x,x') \score_q(x') + \score_q(x)^\top \nabla_x'k(x,x') + \score_q(x')^\top \nabla_{x}k(x,x') + \trace(\nabla_{x,x'}k(x,x')\\
&~ =  \sum_j \lambda_j\big[\score_q(x)^\top e_j(x)  e_j(x') \score_q(x') +  \score_q(x)^\top  e_j(x) \nabla_{x'}e_j(x') 
+ \score_q(x')^\top \nabla_x e_j(x) e_j(x') +  \nabla_x e_j(x)^\top \nabla_{x'} e_j(x') \big] \\
& ~=  \sum_j \lambda_j  \big [ \score_q(x) e_j(x) + \nabla_x e_j(x) \big]^\top  \big[\score_q(x') e_j(x') + \nabla_{x'} e_j(x')   \big]  \\
& ~ = \sum_j  \lambda_j [\stein_q e_j(x)]^\top [\stein_q e_j(x')]. 
\end{align*}
Therefore, $u_q(x,x')$ is positive definite because $\lambda_j > 0$. 
In addition, 
\begin{align*}
\S(p,q)  
 & = \E_{x,x'}[u_q(x,x')]   \\
 &=  \sum_j  \lambda_j \E_x[\stein_q e_j(x)]^\top ~ \E_{x'}[\stein_q e_j(x')]    \\
& =   \sum_j  \lambda_j || \E_x[\stein_q e_j(x)] ||_2^2 .
\end{align*}
\fullversion{\red{
Similarly, we can derive $\S(p,q) = \sum_{j} \lambda_j  || \E_{x} [ e_j(x)(\score_q(x) - \score_p(x)) ]  ||_2^2$ from $\S(p,q) = \E_{x,x'}[(\score_q(x)- \score_p(x))^\top k(x,x') (\score_q(x') - \score_p(x'))]$, that is, 
\begin{align*}
\S(p,q)  
 & =  \E_{x,x'}[(\score_q(x)- \score_p(x))k(x,x') (\score_q(x') - \score_p(x'))] \\
  &=  \E_{x,x'}[\sum_j \lambda_j  (\score_q(x)- \score_p(x))^\top e_j(x) (\score_q(x') - \score_p(x') e_j(x'))] \\
    &= \sum_j  \lambda_j  \E_{x}[(\score_q(x)- \score_p(x))^\top e_j(x)]  ~ \E_{x'}[(\score_q(x') - \score_p(x') e_j(x'))]    \\   
& =   \sum_j  \lambda_j || \E_{x} [ e_j(x)(\score_q(x) - \score_p(x)) ]  ||_2^2. 
\end{align*}
}}
\end{proof}

\begin{proof}[Proof of Theorem~\ref{thm:rkhs}]
We first prove \eqref{equ:beta2} by applying the reproducing property $ k(x,x') = \la k(x,\cdot), k(x',\cdot)\ra_\H  $ on \eqref{equ:dpEdsq}: 
\begin{align*}
\S(p,q) 
& = \E_{x,x'\sim p}[(\score_q(x) - \score_p(x))^\top ~k(x,x')~ (\score_q(x') - \score_p(x'))] \\
& = \E_{x,x'\sim p}[(\score_q(x) - \score_p(x))^\top ~~\big\la k(x,\cdot),~ k(x,\cdot) \big\ra_\H ~ (\score_q(x') - \score_p(x'))] \\
&= \sum_{\ell=1}^d \big\la \E_{x}[(\score_q^\ell(x) - \score_p^\ell(x)) k(x,\cdot)],~ \E_{x'}[k(x,\cdot)  (\score_q^\ell(x) - \score_p^\ell(x))] \big\ra_\H \\
& =  \sum_{\ell=1}^d \big\la  \vv \beta_\ell ,   \vv \beta_\ell \big \ra_{\H}  \\
& =  ||\vv \beta ||_{\H^d}^2
\end{align*}
where we used the fact that $\vv \beta(x') = \E_{x\sim p}[\stein_q k_{x'}(x)] =  \E_{x\sim p} [ (\score_q (x) k(x, x')+  \nabla_{x} k(x, x')] 
= \E_{x}[(\score_q(x) - \score_p(x)) k(x,x')]$. 
In addition, 
\begin{align*}
\la \vv f ,  \vv \beta \ra_{\H^d}  
&= \sum_{\ell=1}^d  \la f_\ell,~  \E_{x\sim p} [ (\score_q^{\ell}(x) k(x, \cdot)+  \nabla_{x_\ell} k(x, \cdot)] \ra_\H  \\
&= \sum_{\ell=1}^d   \E_{x\sim p} [ (\score_q^{\ell}(x) \la f_\ell, k(x, \cdot)\ra_\H +  \la f_{\ell}, \nabla_{x_\ell} k(x, \cdot) \ra_\H ] \\
&= \sum_{\ell=1}^d   \E_{x\sim p} [ (\score_q^{\ell}(x) f_\ell(x)  +  \nabla_{x_\ell} f_\ell(x)  ]  \\
&=\E_{x\sim p} [\trace(\stein_{q} \vv f(x))], 
\end{align*}
where we used the fact that $\nabla_x f(x) = \la  f(\cdot), ~ \nabla_{x}k (x,\cdot) \ra_\H$; see \citep{zhou2008derivative, steinwart2008support}. 
The variational form \eqref{equ:good} then follows the fact that $||\vv \beta||_{\H^d} = \max_{\vv f\in \H^d}\big\{ \la \vv f, \vv \beta \ra_{\H^d}, ~~~.s.t. ~~ || \vv f||_{\H^d} \leq 1\big\}$. 

Finally, the $\vv \beta(\cdot ) = 
 \E_{x\sim p} [ (\score_q(x) k(x, \cdot)+  \nabla_{x} k(x, \cdot)] $ is in the Stein class of $p$ because 
 $k(x, \cdot)$ and $\nabla_x k(x,\cdot)$ are in the Stein class of $p$ for any fixed $x$ (see the proof of Theorem~\ref{thm:kxx}). 
\end{proof}

\begin{proof}[Proof Proposition~\ref{pro:steinrkhsIS}]
For any $f\in \H$ with kernel $k(x,x')$, we have $f = \la f, ~ k (\cdot,x)\ra_\H$ and $\nabla_x f = \la f, ~ \nabla_x k(x,\cdot) \ra_\H$. Therefore,
\begin{align*}
\E_{x\sim p} [ \score_p(x) f(x) + \nabla_x f(x) ]  
&=  \E_{x\sim p} [     \score_p(x) \big \la f, ~ k(x,\cdot) \big\ra_\H +  \big\la f, ~ \nabla_x k(x,\cdot) \big \ra_\H ]    \\
&=\big \la f, ~ ~ \E_{x\sim p} [ \score_p(x)  k(x,\cdot)  +  \nabla_x k(x,\cdot)] \big \ra_\H \\
& = \big \la f, ~~ \E_{x\sim p} [\stein_p k_x(\cdot)] \ra_\H \\
& = 0,
\end{align*}
where the last step used the fact that $\E_{x\sim p} [\stein_p k_x(\cdot)]$ because $k_x(\cdot) = k(\cdot, x)$ is in the Stein class of $p$ for any fixed $x$. 
\end{proof}

\begin{proof}[Proof of Theorem~\ref{thm:uasym}]
Applying the standard asymptotic results of $U$-statistics in \citet[][Section 5.5]{serfling2009approximation}, we just need to check that $\sigma_u^2\neq 0$ when $p\neq q$ and $\sigma_u^2 = 0$ when $p =q$. 

We first note that we can show that  
$\E_{x' \sim p} [u_q(x,x')] =  \trace(\stein_q \vv \beta)$, 
where $\vv \beta(x) = \E_{x' \sim p} [\stein_q k_x(x')]$ and is in the Stein class of $p$ (see the proof of Theorem~\ref{thm:kxx}). 
%
Therefore, when $p = q$, we have $\vv \beta(x) \equiv 0$ by Stein's identity, and hence $\sigma_u^2 = 0$. 

Assume $\sigma_u^2 =0$ when $p\neq q$, we must have $\E_{x'\sim p} [u_q(x,x')] = c$, where $c$ is a constant. 
Therefore, 
$$
c = \E_{x\sim q} \big(\E_{x'\sim p} [u_q(x,x')] \big)  = \E_{x' \sim p} \big ( \E_{x\sim q} [u_q(x,x')] \big ). 
$$
Because we can show that $ \E_{x\sim q} [u_q(x,x')] = 0$ following the proof above for $p=q$, we must have $c = 0$, 
and hence 
$$
 \S(p,q)   = \E_{x\sim p}\big(\E_{x'\sim p} [u_q(x,x')] \big) = c = 0,
$$
 which contradicts with $p \neq q$. 
 
\myempty{
Assume $\sigma_u^2 =0$ when $p\neq q$, we must have $\E_{x'\sim p} [u_q(x,x')] = c$, where $c$ is a constant. 
Because $\score_q^\ell(x) = \nabla_{x_\ell}q(x) / q(x)$, we have
$$
\sum_{\ell} \nabla_{x_\ell} [q(x)\vv \beta_\ell(x)]  = q(x)  \sum_{\ell} ( \score_q^\ell(x)\vv \beta_\ell(x) + \nabla_{x_\ell} \vv \beta_\ell(x)) 
= q(x)  ~ \E_{x' \sim p} [u_q(x,x')] = c q(x), 
$$
 and therefore $\int \sum_{\ell} \nabla_{x_\ell} [q(x)\vv \beta_\ell(x)] dx = c \int q(x) dx = c$. Since $\vv \beta(x)$ is in the Stein class of $q$ by assumption, we should have $c = 0$ and hence $ \S(p,q)  = c =0$, which contradicts with $p \neq q$. 
 }
\end{proof}


\begin{proof}[Proof of Theorem~\ref{thm:FS}]
\eqref{equ:SF} is  obtained by applying Cauchy-Schwarz inequality on \eqref{equ:dpEdsq}, 
\begin{align*}
\S(p,q) ^2 
&  = | \E_{xx'} [(\score_q(x) - \score_p(x))^\top k(x,x') (\score_q(x) - \score_p(x))] |^2  \\
&\leq   \E_{xx'} [ k(x,x')^2] \cdot  \ \E_{x,x'}  [  [(\score_q(x) - \score_p(x))^\top (\score_q(x') - \score_p(x'))]^2 ] \\  
&\leq   \E_{xx'} [ k(x,x')^2] \cdot  \ \E_{x,x'}  [  || \score_q(x) - \score_p(x)||_2^2 \cdot ||\score_q(x) - \score_p(x)||_2^2 ]\\  
&= \E_{xx'} [ k(x,x')^2] \cdot  \mathbb F(p,q)^2. 
\end{align*}

To prove \eqref{equ:FS}, we simply note that \eqref{equ:good} is equivalent to 
$$
\sqrt{\S(p,q)} = \max_{\vv f \in \H^d} \bigg\{  \E_p[ (\score_q(x) - \score_p(x))^\top  \vv f(x) ] ~~~~ s.t.~~~ ||\vv f||_{\H^d} \leq 1  \bigg\}.
$$
Taking $\vv f  = (\score_q - \score_p)/|| \score_q(x) - \score_p(x) ||_{\H^d}$ then gives \eqref{equ:FS}.  
\end{proof}

\begin{pro}
\label{pro:FisherVar}
Let $\mathcal F(p) = \mathcal L^2(p)  \cap \mathcal S(p)$, where $ \mathcal S(p)$ represents the Stein class of $p$, then we have 
\begin{align*}
 \sqrt{\mathbb F(p,q)} \geq  \max_{\vv f \in \mathcal F(p)^d} \bigg\{ \E_{p}[\trace(\stein_q \vv f(x))]  ~~~~~~ s.t.~~~ \E_p[ || \vv f(x)||^2_2 ] \leq 1 \bigg\}.  
\end{align*}
and the equality holds when $\score_q - \score_p \in \mathcal F(p)^d$. 
\end{pro}
Note that $\mathcal L^2(p)$ is larger than the Stein class and RKHS, and includes discontinuous, non-smooth functions, and hence we need to ensure $\vv f$ is in the Stein class explicitly. 
\begin{proof}
Denote by  $(\mathcal L^2(p))^d = \mathcal L^2(p) \times \cdots \times \mathcal L^2(p)$,  note that by the definition of $\mathbb F(p,q)$, we have
 \begin{align}
 \label{equ:fff}
\sqrt{\mathbb F(p,q)} 
= \max_{\vv f \in (\mathcal L^2(p))^d } \bigg\{ \sum_{\ell = 1}^d  \E_p[f_{\ell}(x) (\score_q^\ell(x) - \score_p^\ell(x))] 
~~~~ s.t.~~~ \E_p[||\vv f(x)||_2^2] \leq 1 \bigg\}. 
\end{align}
Restricting the maximizing to $\mathcal F(p)^d$ and applying Lemma~\ref{lem:basic2}  would give the result. 
\end{proof}


\end{document}